\newtheorem{theorem}{Theorem}
\newtheorem{proposition}{Proposition}
\newtheorem{lemma}{Lemma}
\newtheorem{definition}{Definition}
\DeclareMathOperator*{\argmin}{argmin}
\DeclareMathOperator{\tr}{tr}
\DeclareMathOperator{\vect}{vec}
\newcommand{\abs}[1]{\left\lvert#1\right\rvert}
\def\bX{{\boldsymbol{X}}}
\definecolor{cvprblue}{rgb}{0.21,0.49,0.74}
\newcommand\blfootnote[1]{%
  \begingroup
  \renewcommand\thefootnote{}\footnote{#1}%
  \addtocounter{footnote}{-1}%
  \endgroup
}
\title{
Finsler Multi-Dimensional Scaling:\\ 
Manifold Learning for Asymmetric Dimensionality Reduction and Embedding
}
\author{Thomas Dag\`es\textsuperscript{1*}
\hspace{-1em} \and 
Simon Weber\textsuperscript{2,3*} 
\and Ya-Wei Eileen Lin\textsuperscript{4}
\and Ronen Talmon\textsuperscript{4}
\hspace{-1em}
\and Daniel Cremers\textsuperscript{2,3}
\hspace{-1em}
\and 
Michael Lindenbaum\textsuperscript{1}
\hspace{-1em}
\and 
Alfred M. Bruckstein\textsuperscript{1}
\hspace{-1em}
\and
Ron Kimmel\textsuperscript{1}
\hspace{-1em}
}
\begin{document}
\setlength{\abovedisplayskip}{3pt}
\setlength{\belowdisplayskip}{3pt}
{
\definecolor{somegray}{gray}{0.5}
\newcommand{\darkgrayed}[1]{\textcolor{somegray}{#1}}
\begin{textblock}{17}(-0.5, 1)  %
\centering
\darkgrayed{To appear in Proceedings of the \emph{IEEE/CVF Conference on Computer Vision and Pattern Recognition (CVPR)},\\ Nashville, TN, USA, 2025 \copyright~2025 IEEE.}
\end{textblock}
}

\maketitle

\begin{abstract}

\blfootnote{* Equal contribution. \\ 
\phantom{azer} \Letter \hspace{0.2em} {\tt thomas.dages@cs.technion.ac.il} \\
\phantom{azer} \Letter \hspace{0.2em} {\tt sim.weber@tum.de}
} 
\footnotetext[1]{Taub Faculty of Computer Science, Technion}
\footnotetext[2]{Technical University of Munich}
\footnotetext[3]{Munich Center for Machine Learning}
\footnotetext[4]{Viterbi Faculty of Electrical and Computer Engineering, Technion}

\vspace{-1em}

Dimensionality reduction is a fundamental task that aims to simplify complex data by reducing its feature dimensionality while preserving essential patterns, with core applications in data analysis and visualisation.
To preserve the underlying data structure, multi-dimensional scaling (MDS) methods focus on preserving pairwise dissimilarities, such as distances. 
They optimise the embedding to have pairwise distances as close as possible to the data dissimilarities.
However, the current standard is limited to embedding data in Riemannian manifolds.
Motivated by the lack of asymmetry in the Riemannian metric of the embedding space, this paper extends the MDS problem to a natural asymmetric generalisation of Riemannian manifolds called Finsler manifolds. 
Inspired by Euclidean space, we define a canonical Finsler space for embedding asymmetric data.
Due to its simplicity with respect to geodesics, data representation in this space is both intuitive and simple to analyse. 
We demonstrate that our generalisation benefits from the same theoretical convergence guarantees.
We reveal the effectiveness of our Finsler embedding across various types of non-symmetric data, highlighting its value in applications such as data visualisation, dimensionality reduction, directed graph embedding, and link prediction.

\vspace{-1em}

\end{abstract}

\section{Introduction}

Dimensionality reduction is a task at the core of data analysis, allowing complex data to be represented in simple low-dimensional forms.
In Multi-Dimensional Scaling (MDS), the goal is to preserve pairwise dissimilarities between datapoints, thereby retaining the essential structure of the data.
\begin{figure}[t]
  \centering
   \includegraphics[width=\columnwidth]{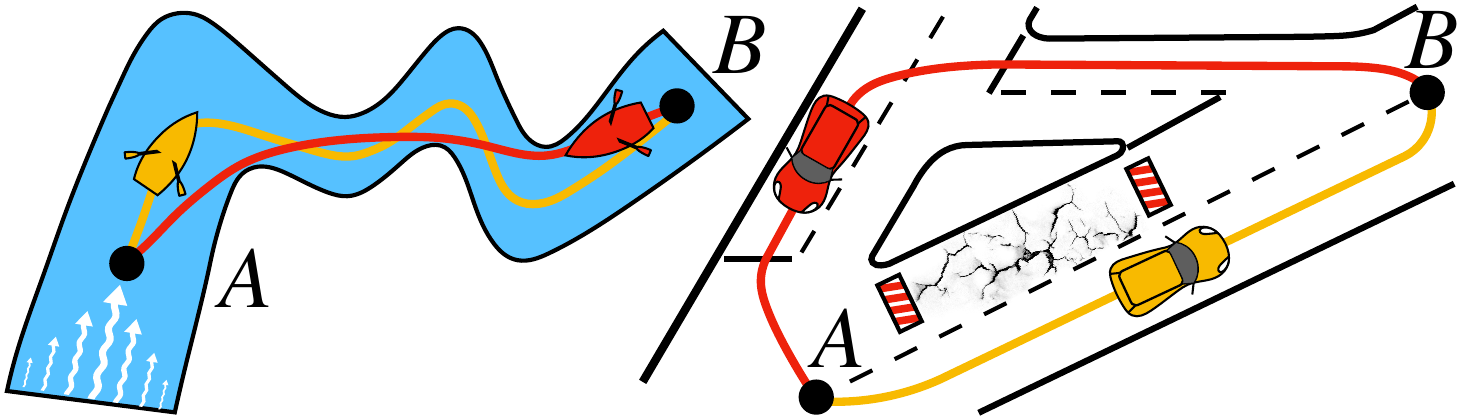}
   \caption{
   External fields, such as non-uniform currents, lead to time-wise geodesic curves from $A$ to $B$ (in yellow) that differ from those from $B$ to $A$ (in red). 
   Although asymmetry is impossible in Riemannian manifolds, it is at the core of Finsler geometry.
   }
   \label{fig: Finsler river road}
   \vspace{-1em}
\end{figure}
However, the original MDS approach \cite{shepard1962analysis,kruskal1964multidimensional,cox2000multidimensional} assumes that data relationships are primarily linear, limiting its ability to capture the non-linear structures often present in high dimensions.
Manifold learning extends this approach by assuming that data lies on a low-dimensional manifold, where meaningful relationships are reflected in more complex pairwise dissimilarities.
Methods that integrate manifold learning into MDS can capture these structures by focusing on pairwise dissimilarities, such as geodesic distances, which reflect the intrinsic geometry of the manifold and better preserve its continuity in the embedding space.

The MDS community has universally decided to embed data into spaces implicitly equipped with a Riemannian geometry \cite{bronstein2008numerical,do2016differential}, choosing Euclidean \cite{schwartz1989numerical,tenenbaum2000global} or more generally geodesic \cite{bronstein2006generalised} distances in embedding space.
However, such embeddings inherit from the limitations of Riemannian metrics. In particular, Riemannian distances are bound to be symmetric, meaning that they cannot accurately capture the possible asymmetric nature of the original data dissimilarities.
Yet, such properties are common in real-world applications across a wide range of fields. 
In physical systems, external fields such as currents lead to different shortest-time paths whether going upstream or downstream (see \cref{fig: Finsler river road}). 
In directed graphs, from social media to road networks, non-symmetric edges lead to non-symmetric pairwise dissimilarities.
This inability of Riemannian spaces to account for asymmetry suggests that asymmetric data should be embedded in spaces equipped with other types of metrics than the traditional Riemannian ones.
Yet, challenging the Riemannian metric of the embedding space remains largely uncharted.

This strong limitation calls for defining and studying the MDS problem where the embedding space is equipped with a more general metric. 
Bernard Riemann himself first mentioned generalisations to his own metrics \cite{riemann1854hypotheses}, but they were only investigated later by Paul Finsler. In 1918, he proposed a generalisation of Riemannian manifolds \cite{finsler1918ueber}, which Elie Cartan later named \emph{Finsler manifolds} \cite{cartan1933espaces}.
The main assumption in Finsler manifolds is that the metric is not symmetric, unlike in Riemannian ones, meaning that distances are sensitive to the direction of motion. 
Consequently, both local and global distances between two points are asymmetric\footnote{Asymmetric metrics and distances are sometimes called quasi metrics and quasi distances in the literature.}; the distance from $A$ to $B$ is generally different from the distance from $B$ to $A$. 
Finsler manifolds have been mostly investigated in theoretical mathematics \cite{shen2001lectures,bao2012introduction,shen2016introduction}, gravitational physics \cite{randers1941asymmetrical,vacaru2005clifford,vacaru2011principles,pfeifer2012finsler,asanov2012finsler,caponio2014wind,hohmann2019finsler}, mechanics \cite{bidabadi2010geometric,clayton2015finsler}, current navigation problems \cite{zermelo1931navigationsproblem}, and seismic sciences \cite{yajima2009finsler}.
However, they are largely unexplored in the computer vision and machine learning community.

In this paper, we generalise the MDS problem to account for asymmetric dissimilarities by embedding the data into a Finsler space rather than a symmetric Riemannian one.
In particular, our contributions are as follows:

\begin{enumerate}[label=(\alph*)]
    \setlength\itemsep{0.3em}
    \item We revisit multi-dimensional scaling by formulating a new problem, called \emph{Finsler MDS}. 
    The goal is to find an embedding to a Finsler space, allowing asymmetric data dissimilarities. 
    To our knowledge, our framework is the first that goes beyond Riemannian geometry in the field of manifold learning.

    \item We propose a new canonical Finsler space, that generalises the Euclidean space to incorporate asymmetric distances. 
    Its geodesic simplicity provides embeddings that are intuitive and simple to analyse.

    \item We extend the popular SMACOF algorithm from traditional MDS to solve the Finsler MDS problem, and prove its theoretical convergence.
    We also show how to solve Finsler MDS with modern deep learning techniques and neural networks.

    \item We experimentally highlight the benefits of Finsler MDS for a variety of applications.
    We perform qualitative data visualisation experiments on various types of data -- curved manifolds, flat current maps, and directed graphs. 
    Such asymmetric visualisations are impossible in traditional MDS. 
    We also demonstrate the quantitative superiority of our method for node embedding and link prediction in directed graphs.

    \item We release our code to facilitate further research: 
    \url{https://github.com/Tommoo/FinslerMDS}.
\end{enumerate}

\section{Related Work}
\label{sec:related_work}

Multi-dimensional scaling and manifold learning have a rich history, with many different approaches to find embeddings satisfying different properties.
This contrasts with Finsler geometry which has seldom been applied in the computer vision and pattern recognition community.

\hfill \break
\noindent \textbf{Multi-Dimensional Scaling.}
MDS methods aim to preserve data dissimilarities of 
a low-dimensional embedding by minimising the stress function, which is a weighted sum of the squared difference between the embedded distances and the data dissimilarities \cite{leeuw1977application, borg2007modern,saeed2018survey}.
Dissimilarities can be computed using either local \cite{roweis2000nonlinear,donoho2003hessian,zhang2006mlle,belkin2001laplacian,belkin2003laplacian,zhang2004principal,coifman2005geometric,coifman2006diffusion,lin2023hyperbolic,lin2024tree,suzuki2019hyperbolic, gou2023discriminative,martin2005visualizing,mcinnes2018umap,venna2010information} or global \cite{weinberger2005nonlinear,weinberger2006unsupervised,brand2005nonrigid,funke2020low} criteria or a combination of the two \cite{pearson1901pca,hotelling1933analysis,scholkopf1998nonlinear,schwartz1989numerical,tenenbaum2000global,rosman2010nonlinear,aflalo2013spectral,schwartz2019intrinsic,bracha2024wormhole}.
Nevertheless, traditional approaches cannot handle asymmetric dissimilarities, which naturally occurs for many features \cite{tversky1977features} in physical systems and directed graphs.
Remarkably, few efforts have tackled asymmetry for MDS, such as
ASYMSCAL-like methods
\cite{young1975asymmetric,chino1978graphical,chino2011asymmetric,martin2005visualizing,olszewski2024asymmetric}, slide vector models \cite{leeuw1982theory,gower1977analysis}, Gower models \cite{gower1977analysis,constantine1978graphical}, the radius-distance model \cite{okada1987nonmetric,tanioka2018asymmetric}, and the hill-climbing model \cite{borg2007modern,okada2012bayesian}.
In brief, these methods use one of the following strategies. The first is to design asymmetric weights, yielding asymmetric importance for fitting to (symmetrised) dissimilarities in the stress function. The second is to design asymmetric embeddings directly by departing from a metric perspective, either by using non metric formula, as in hill-climbing, or by using additional stratagems to display asymmetric visualisations, as in the radius-distance model.
To the best of our knowledge, the embedding space has never been equipped with a metric structure that naturally reflects asymmetry, such as Finsler geometry.

\hfill \break
\noindent \textbf{Finsler Geometry and Computer Vision.}
Except for a few notable works in robotics \cite{ratliff2021}, image processing \cite{chen2016geoevo,chen2015global,chen2017elastica,chen2016minpath,yang2019geodesic,dages2024metric}, and shape analysis \cite{weber2024finsler},
Finsler geometry is a largely uncharted topic within the computer vision and machine learning community. For a mathematical introduction to Finsler geometry, see \cite{ohta2009heat,bao2012introduction,mirebeau2014efficient,bonnans2022linear}.

\section{Traditional Multi-Dimensional Scaling}

MDS aims to find an embedding $x_1,\ldots, x_N\in\mathbb{R}^m$, or in row-stacked format $\boldsymbol{X}\in\mathbb{R}^{N\times m}$, of $N$ data points $\tilde{\boldsymbol{X}}\in\mathbb{R}^{N\times n}$ into a canonical space, e.g.\ the Euclidean space $\mathbb{R}^m$ with $n>m$, while preserving data dissimilarities $\boldsymbol{D}\in\mathbb{R}_+^{N\times N}$. 
The points $\tilde{\boldsymbol{X}}$ often lie on a manifold $\tilde{\mathcal{X}}\subset \mathbb{R}^n$ to be preserved when embedding, which is the goal of \emph{manifold learning}.
If $\tilde{\mathcal{X}}$ is equipped with a metric $\tilde{L}$, given at points $\tilde{x}\in\tilde{\mathcal{X}}$ by $\tilde{L}_{\tilde{x}}:\mathcal{T}_{\tilde{x}}\tilde{\mathcal{X}}\to\mathbb{R}_+$ where $\mathcal{T}_{\tilde{x}}\tilde{\mathcal{X}}$ is the tangent plane, a manifold preserving strategy is to take geodesic distances as dissimilarities. Let us properly define this notion.

\hfill \break
\noindent \textbf{Geodesic Distances.}
Let $\mathcal{X}$ be a manifold equipped with a metric $L$ given by $L_x:\mathcal{T}_x\mathcal{X} \to \mathbb{R}_+$.
By integrating the length of tangent vectors, we can define the length of a smooth curve $\gamma:[0,1]\to \mathcal{X}$ as
\begin{equation}
    \label{eq:length}
    \mathcal{L}_L(\gamma) = \int_0^1 L_{\gamma(t)}(\gamma'(t))dt.
\end{equation}
A shortest path, also known as a shortest geodesic, $\gamma_{x \to y}^L$ between two points $x,y \in \mathcal{X}$ minimises \cref{eq:length}.
The geodesic distance from $x$ to $y$ is the length of such shortest paths
\begin{equation}
    \label{eq:distance_geodesic}
    d_L (x,y) = \mathcal{L}_L (\gamma_{x \to y}^{L}).
\end{equation}

\hfill \break
\noindent \textbf{MDS and Riemannian Geometry.}
In MDS, only the knowledge of $\boldsymbol{D}$ is needed: 
the original data $\tilde{\boldsymbol{X}}$, its dimensionality $n$, or its manifold $\tilde{\mathcal{X}}$ can be unknown.
A key assumption in MDS is that dissimilarities are symmetric, i.e.\ $\boldsymbol{D}^\top = \boldsymbol{D}$,  often implying that $\boldsymbol{D}$ represents Riemannian pairwise distances, e.g.\ Riemannian geodesic distances in $\tilde{\mathcal{X}}$. The MDS task can then be termed \emph{metric MDS}.
In consequence, the embedding space $\mathcal{X} = \mathbb{R}^m$ is viewed as a Riemannian manifold, with metric $R$, that is the embedding of the Riemannian manifold $\tilde{\mathcal{X}}$, with metric $\tilde{R}$.

\hfill \break
\noindent \textbf{Riemannian Manifolds.}
A Riemannian manifold $\mathcal{X}$ is equipped with a quadratic metric $R$ given by $R_x(u)= \sqrt{u^{\top} M(x)u}$. 
The metric tensor $M(x)$ is a symmetric positive definite (SPD) matrix that fully describes $R$ and is abusively called the Riemannian metric.
Fundamentally, Riemannian metrics are symmetric as $R_{x}(-u) = R_{x}(u)$ for all $u \in \mathcal{T}_{x}\mathcal{X}$. 
Thus, traversing a curve in one direction or the other leads to the same distance, implying symmetric geodesic distances, i.e.\ $d_R(x,y) = d_R(y,x)$.

To reveal the simple intrinsic nature of the Riemannian manifold $\tilde{\mathcal{X}}$, the MDS embedding space $\mathcal{X}$ should be a simple canonical Riemannian space. In particular for the task of \emph{manifold flattening}, geodesics in $\mathcal{X}$ should be the usual Euclidean segment\footnote{This segment from $x_i$ to $x_j$ is given by $(1-t)x_i + tx_j$ for $t\in[0,1]$.}, making the space $\mathcal{X}$ flat. 
The appropriate canonical Riemannian space is thus the traditional Euclidean space $\mathbb{R}^m$, corresponding to taking $R_x(u) = \lVert u\rVert_2$.
We can now formally present the MDS task.

\hfill \break
\noindent \textbf{MDS Formulation.}
The MDS task consists in finding the Euclidean embedding $\boldsymbol{X}$ that minimises the stress function,
\begin{equation}
    \label{eq:MDS}
    \sigma_E^{2}(\boldsymbol{X}) = \sum\limits_{i,j} w_{i,j} \big(d_E(x_i,x_j) - \boldsymbol{D}_{i,j}\big)^2,
\end{equation}
where
$d_E$ is the Euclidean distance
in $\mathbb{R}^m$ of the embedded data $\boldsymbol{X}$
and $\boldsymbol{D}$ is a symmetric dissimilarity matrix.
The weights $w_{i,j}\ge 0$ are usually taken to be uniform, however other configurations can be useful to better preserve the topology of the manifold, such as when holes are present due to undersampling or boundaries \cite{bronstein2006generalised,bronstein2006efficient,rosman2010nonlinear,schwartz2019intrinsic,bracha2024wormhole}.

Kernel-based \cite{scholkopf1997kernel} closed form solutions exist for uniform weights $w_{i,j}$, and it is called Isomap when Dijkstra's algorithm \cite{dijkstra1959note} is used to compute geodesic distances as dissimilarities.
This property breaks down for non-uniform weights in which case the stress is minimised using an iterative descent strategy, such as the SMACOF algorithm \cite{leeuw1977application}.

\hfill \break
\noindent \textbf{MDS Limitations.}
Traditional MDS requires symmetric dissimilarities, i.e.\ $\boldsymbol{D}^\top = \boldsymbol{D}$. 
However, in many cases, dissimilarities are naturally non-symmetric, i.e.\ $\boldsymbol{D}^\top \neq \boldsymbol{D}$.
For example, this arises with the non-symmetric weights of directed graphs, but it can also occur in real-world systems (see \cref{fig: Finsler river road}).
Yet, to the best of our knowledge, none of these methods fully address a core limitation of traditional MDS: its reliance on a Riemannian embedding space, which imposes symmetric distances between points. Thus, even with various adjustments, embedding into a Riemannian space $\mathcal{X}$ cannot capture the non-symmetric structure of the data.
We address this by introducing metric tools that move beyond Riemannian geometry and support \emph{asymmetric} distances. Specifically, we consider Finsler geometry as it offers a refined, asymmetric generalisation of Riemannian geometry. 
Our goal is to embed data with non-symmetric dissimilarities $\boldsymbol{D}$ into a Finsler space.

\section{Finsler Geometry}
\label{sec: finsler geometry}

Finsler manifolds \cite{bao2012introduction} are a generalisation of Riemannian manifolds that incorporates distance asymmetry \cite{chern1996finsler}. 
A Finsler manifold $\mathcal{X}$ is equipped with a Finsler metric $F$ satisfying at all points $x\in \mathcal{X}$ the following properties.
\begin{definition}[Finsler metric]
    A Finsler metric $F$, given by $F_x:\mathcal{T}_x \mathcal{X}\to \mathbb{R}_+$,
    is smooth,  
    positive definite \ref{it: finsler metric positive definite}, positive homogeneous \ref{it: finsler metric positive homogeneous}, 
    and satisfies the triangle inequality \ref{it: finsler metric triangular inequality}:
    \begin{enumerate}[label=(\roman*)]
        \item\label{it: finsler metric positive definite} $F_x(u)\!\ge\! 0$ and $F_x(u)=0\!\!\iff\!\! u=0$, \quad $\forall u\in \mathcal{T}_x\mathcal{X}$,
        \item\label{it: finsler metric positive homogeneous} $F_x(\lambda u) = \lambda F_x(u)$, \quad $\forall u\in \mathcal{T}_x\mathcal{X}$, $\lambda>0$,
        \item\label{it: finsler metric triangular inequality} $F_x(u+v)\le F_x(u) + F_x(v)$, \quad $\forall u,v\in \mathcal{T}_x\mathcal{X}$.
    \end{enumerate}
\end{definition}

In contrast to Riemannian metrics, Finsler metrics are no longer homogeneous, leading to metric asymmetry as $F_x(-u)\neq F_x(u)$ in general. This asymmetry is impossible for Riemannian metrics.
Additionally, Finsler metrics are no longer given by a quadratic form in general \cite{chern1996finsler} and do not share a universal parametric description.
However, several parametric families of Finsler metrics exist \cite{javaloyes2011definition}, such as the $(\alpha,\beta)$ metrics \cite{matsumoto1972c}, which induce Randers \cite{randers1941asymmetrical}, Matsumoto \cite{matsumoto1989slope} or Kropina metrics \cite{kropina1959projective}. Due to the simplicity of its explicit formulation, we consider in the following the Randers metric which generalises the Riemannian metric.

\begin{definition}[Randers metric]
       A Randers metric $F$ is parametrised by a tensor field $M$ of symmetric positive definite matrices and a drift vector field $\omega$ with  $\lVert\omega(x) \rVert_{M^{-1}(x)}\!\!<\!\!1$ such that
       $F_x(u) = \sqrt{u^{\top} M(x) u} + \omega(x)^{\top} u$.
\end{definition}

In particular, the Randers metric is the sum of a Riemannian metric associated to the tensor field $M$ and an asymmetric linear part due to the vector field $\omega$. Note that the positivity of the metric stems from $\lVert \omega(x) \rVert_{M(x)^{-1}} < 1$.
Riemannian metrics are Randers metrics, and thus also Finsler, and correspond to $\omega\equiv 0$.
We can now present the novel Finsler multi-dimensional scaling problem.

\section{Finsler Multi-Dimensional Scaling}\label{sec:Finsler Multi-Dimensional Scaling}

We aim to embed non-symmetric data into $\mathbb{R}^m$ viewed as a Finsler space. This task requires the formulation of the general Finsler MDS problem (\cref{subsec:problem_statement}) and the establishment of a canonical Finsler space (\cref{subsec:canonical_randers_space}) that generalises the canonical Riemannian space of traditional MDS.

\subsection{Problem Statement}
\label{subsec:problem_statement}

We formalise Finsler manifold learning, or \emph{Finsler MDS}, as the minimisation of the Finsler stress function
\begin{equation}
    \label{eq:fmds}
    \sigma_F^{2}(\boldsymbol{X}) = \sum\limits_{i,j} w_{i,j} \big(d_F(x_i,x_j) - \boldsymbol{D}_{i,j}\big)^2,
\end{equation}
where 
$d_F$ is the Finsler distance
in $\mathcal{X} = \mathbb{R}^m$
with chosen Finsler metric $F$. In general, unlike in classical scaling, data distances need not be symmetric, $\boldsymbol{D}_{j,i} \neq \boldsymbol{D}_{i,j}$, nor the embedding distance, $d_{F}(x_{i},x_{j}) \neq d_{F}(x_{j}, x_{i})$.

As in traditional MDS, it is often desirable for the embedding space to have a straightforward, canonical metric, ensuring that the embedding space $\mathcal{X} = \mathbb{R}^m$ remains flat.
In this context, the embedding space  $\mathbb{R}^m$ is said to be flat when its geodesic curves are simply straight Euclidean segments.
For the Riemannian case, this is achieved with the Euclidean metric. However, in Finsler spaces, we need to establish an appropriate canonical metric.
In this paper, we develop a canonical Finsler space that extends the Euclidean space, allowing for asymmetric distances. This approach enables us to maintain the simplicity and flatness while accommodating the asymmetric structure of Finsler geometry.

\subsection{Canonical Randers Space}\label{subsec:canonical_randers_space}

We propose using $\mathbb{R}^m$ as the canonical Finsler space equipped with the following Randers metric.

\begin{definition}[Canonical Randers space]
    The space $\mathbb{R}^{m}$ is a canonical Randers space if it is equipped with the canonical Randers metric $F^{C}$, with Euclidean Riemannian component and a constant drift vector $\omega$
    with $\alpha \equiv \lVert \omega\rVert_2 < 1$.
\end{definition}

The canonical Randers metric $F^C$ is uniform as
\begin{equation}\label{eq:canonical randers}
    F_x^C(u) = \lVert u \rVert_{2} + \omega^\top u, \quad \forall x\in \mathbb{R}^m.
\end{equation}
The canonical Randers space is Euclidean along hyperplanes orthogonal to $\omega$, with an additional dimension along $\omega$ to capture asymmetry. 
The parameter $\alpha$ controls the influence of asymmetry: when $\omega \equiv 0$, i.e.\ $\alpha=0$, asymmetry is discarded, and the metric reduces to the Euclidean one. 
As $\alpha$ increases, asymmetry takes on a greater role.

In Euclidean space, geodesics are given by usual straight segments. The following theorem shows that this property also holds in the canonical Randers space. 

\begin{theorem}[Flatness of the canonical Randers space]
    \label{th: canonical Randers space is flat}
    The shortest path between any points $x, y\in \mathbb{R}^m$ for the canonical Randers metric $F^C$ is the Euclidean straight segment between $x, y$, i.e.\ $\gamma_{x\to y}^{F^C}(t) = (1-t)x + ty$ for $t\in[0,1]$. 
\end{theorem}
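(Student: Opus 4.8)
The plan is to exploit the additive structure of the canonical Randers length functional, which splits into the ordinary Euclidean length plus a term depending only on the endpoints. First I would write, for an arbitrary smooth curve $\gamma:[0,1]\to\mathbb{R}^m$ joining $x=\gamma(0)$ to $y=\gamma(1)$, its canonical Randers length by combining the length functional \cref{eq:length} with the explicit metric \cref{eq:canonical randers}:
\[
\mathcal{L}_{F^C}(\gamma) = \int_0^1 \norm{\gamma'(t)}_2\,dt + \int_0^1 \omega^\top \gamma'(t)\,dt .
\]
The first integral is precisely the Euclidean length $\mathcal{L}_E(\gamma)$, so the whole argument reduces to controlling the second, asymmetric term.

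The key observation is that, because $\omega$ is a \emph{constant} drift, the integrand of the second term is an exact derivative: $\omega^\top\gamma'(t) = \frac{d}{dt}\big(\omega^\top\gamma(t)\big)$. By the fundamental theorem of calculus the second integral evaluates to $\omega^\top\gamma(1)-\omega^\top\gamma(0) = \omega^\top(y-x)$, which is independent of the chosen path. Hence for every admissible curve
\[
\mathcal{L}_{F^C}(\gamma) = \mathcal{L}_E(\gamma) + \omega^\top(y-x),
\]
i.e.\ the drift contributes the same fixed boundary term to all paths from $x$ to $y$ and merely shifts the length functional by a constant.

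Consequently, minimising $\mathcal{L}_{F^C}$ over all paths from $x$ to $y$ is equivalent to minimising the Euclidean length $\mathcal{L}_E$, whose unique minimiser (up to reparametrisation) is the straight segment $t\mapsto (1-t)x+ty$ — a classical fact I would invoke directly. This identifies the canonical Randers shortest path with the Euclidean segment and proves the claim. I expect the proof to be short, with the only points requiring care being: (i) justifying path-independence of the drift term, which is immediate since the constant covector $\omega$ is an exact one-form $d(\omega^\top z)$; and (ii) confirming that the asymmetry cannot produce a shorter competitor, which is ruled out precisely because it adds an identical constant to every path. The hypothesis $\alpha=\norm{\omega}_2<1$ plays no role in the minimisation itself but guarantees that $F^C$ is a genuine positive-definite Finsler metric, so that $\mathcal{L}_{F^C}$ is nonnegative and the notion of a shortest path is well posed.
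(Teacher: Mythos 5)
Your proof is correct and coincides with the paper's second (``Calculation'') proof: both observe that the constant drift makes $\int_0^1 \omega^\top\gamma'(t)\,dt$ an exact, path-independent boundary term $\omega^\top(y-x)$, reducing the problem to minimising the Euclidean length. The paper additionally offers an Euler--Lagrange derivation, but your direct argument is the same as its streamlined version (and arguably cleaner, since you avoid the WLOG rotation of $\omega$ onto a coordinate axis).
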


We provide a proof in \cref{sec: proof of canonical Randers space is flat}.
\Cref{th: canonical Randers space is flat} means that the canonical Randers space is \emph{flat}, as geodesic paths are not curved.
This peculiarity results from the uniformity of the drift component.
While geodesic paths in the canonical Randers space are identical to those in Euclidean space, their lengths vary depending on the direction of traversal.


\begin{proposition}[Canonical Randers distance]
    \label{prop:randers_distance}
    The canonical Randers distance
    $d_{F^C}$
    between points $x,y\in \mathbb{R}^m$ is
    \begin{equation}\label{eq:randers distance}
        d_{F^C}(x,y) = \lVert y-x\rVert_2 + \omega^{\top} (y-x) .
    \end{equation}
\end{proposition}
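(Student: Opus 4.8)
The plan is to leverage \cref{th: canonical Randers space is flat}, which has already identified the shortest path between $x$ and $y$ as the straight Euclidean segment $\gamma_{x\to y}^{F^C}(t) = (1-t)x + ty$ for $t\in[0,1]$. Once the minimising geodesic is known, the geodesic distance is by definition (\cref{eq:distance_geodesic}) simply the length of this curve, so the entire proof reduces to evaluating $\mathcal{L}_{F^C}(\gamma_{x\to y}^{F^C})$ through the length functional of \cref{eq:length}. Because the length functional is parametrisation-independent for a positively homogeneous metric, the specific affine parametrisation supplied by the theorem is a harmless choice.

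First I would differentiate the parametrisation to obtain the constant tangent vector $\gamma'(t) = y - x$ for all $t\in[0,1]$. Next I would substitute this into the canonical Randers metric \cref{eq:canonical randers}; since $F^C$ is uniform (independent of the base point), the integrand $F^C_{\gamma(t)}(\gamma'(t)) = \lVert y-x\rVert_2 + \omega^\top(y-x)$ is constant in $t$. The length integral then trivially evaluates as $\int_0^1 \big(\lVert y-x\rVert_2 + \omega^\top(y-x)\big)\,dt = \lVert y-x\rVert_2 + \omega^\top(y-x)$, which is exactly \cref{eq:randers distance}.

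The striking feature here is that there is essentially no obstacle: all the difficulty is absorbed into \cref{th: canonical Randers space is flat}, which guarantees that the straight segment is genuinely the minimiser, so that its length equals the distance rather than merely furnishing an upper bound. The only minor points worth verifying are that the integrand is indeed constant (a consequence of both the uniformity of $F^C$ and the constancy of $\gamma'$), and that the resulting value is strictly positive for $x\neq y$, which follows from the constraint $\alpha = \lVert\omega\rVert_2 < 1$ together with Cauchy--Schwarz: $\omega^\top(y-x) \ge -\lVert\omega\rVert_2\,\lVert y-x\rVert_2 > -\lVert y-x\rVert_2$.
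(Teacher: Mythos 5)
Your proposal is correct and follows essentially the same route as the paper: invoke \cref{th: canonical Randers space is flat} to identify the geodesic as the straight segment, note that $\gamma'(t)=y-x$ and that the uniform metric makes the integrand constant, and evaluate the length integral directly. The extra positivity check via Cauchy--Schwarz is a nice touch but not needed beyond what the paper does.
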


See \cref{sec: proof of randers canonical distance} for a proof.
When $\omega \equiv 0$, we recover the Euclidean distance. 
However, for $\omega \neq 0$, this distance becomes \emph{asymmetric}: $d_{F^C}(y, x) \neq d_{F^C}(x, y)$.
Remarkably, geodesic distances in the canonical Randers metric have a straightforward closed-form solution.
It is composed of the Euclidean distance $\lVert y-x\rVert_2$ and an additional asymmetric term $\omega^\top (y-x)$, that is the projection of $(y-x)$ along $\omega$.
Thus, in the canonical Randers space, both shortest paths and geodesic distances are known and easy to compute, providing an intuitive visual understanding of the asymmetric structure of $\mathbb{R}^m$ viewed as a canonical Randers space.

\hfill \break
\noindent \textbf{Riemannian Generalisations.}
As noted above, the canonical Randers space generalises the Euclidean space in many aspects. When $\omega\equiv 0$, it becomes Euclidean. When $\omega\neq 0$, it still shares the same geodesic paths as the Euclidean space, and a similar simplicity for computing their length. 
Moreover, for symmetric data and non-zero  $\omega$, the canonical Randers space also extends the traditional MDS embedding, as formalised in the following theorem.

\begin{theorem}
    \label{th: canonical randers generalises riemann with extra dim}
    Assume that the data is symmetric, $\boldsymbol{D} = \boldsymbol{D}^\top$, and that it can be accurately embedded into a Euclidean space $\mathbb{R}^m$, i.e.\ we can find $\boldsymbol{X}\in\mathbb{R}^m$ such that 
    $d_E(x_i, x_j) = \boldsymbol{D}_{i,j}$ for all $(i,j)$.
    The solution to the Finsler MDS problem (\cref{eq:fmds}) using the canonical Randers space $\mathcal{X} = \mathbb{R}^{m+1}$ with $\omega\neq 0$ is
    given by the traditional MDS embedding in a
    $m$-dimensional
    hyperplane orthogonal to $\omega$.
\end{theorem}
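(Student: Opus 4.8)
The plan is to reduce the Finsler stress to a transparent, separable form by exploiting the closed-form Randers distance and the symmetry of the data, and then read off the minimiser directly. First I would substitute the explicit expression from \cref{prop:randers_distance}, $d_{F^C}(x_i,x_j) = \norm{x_j - x_i}_2 + \omega^\top(x_j - x_i)$, into the Finsler stress \cref{eq:fmds}. Writing $\alpha = \norm{\omega}_2 > 0$ and decomposing each embedded point as $x_i = p_i + s_i\hat\omega$, where $\hat\omega = \omega/\alpha$ and $p_i$ lies in the hyperplane $H = \{x : \hat\omega^\top x = 0\}$, the ordered-pair term becomes a function of the in-plane separation $r_{ij} = \norm{p_j - p_i}_2$ and the signed axial gap $\delta_{ij} = s_j - s_i$, namely $w_{i,j}\big(\sqrt{r_{ij}^2 + \delta_{ij}^2} + \alpha\delta_{ij} - \bD_{i,j}\big)^2$.

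Next I would pair the term $(i,j)$ with its transpose $(j,i)$. Using $\bD_{j,i} = \bD_{i,j}$, $r_{ji} = r_{ij}$, $\delta_{ji} = -\delta_{ij}$ and symmetric weights $w_{i,j} = w_{j,i}$, the two linear drift contributions carry opposite signs so the cross terms cancel, leaving the clean decomposition
\[ \sigma_{F}^2(\bX) = 2\sum_{i<j} w_{i,j}\Big[\big(\sqrt{r_{ij}^2 + \delta_{ij}^2} - \bD_{i,j}\big)^2 + \alpha^2\delta_{ij}^2\Big]. \]
This exhibits the stress as an ordinary Euclidean stress in $\mathbb{R}^{m+1}$ plus a non-negative penalty $\alpha^2\delta_{ij}^2$ charging any spread of the embedding along $\omega$.

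Then I would exhibit a zero-stress configuration and argue it is forced. By assumption there is a Euclidean embedding $\{p_i^\star\}\subset\mathbb{R}^m \cong H$ with $\norm{p_j^\star - p_i^\star}_2 = \bD_{i,j}$; placing the points at $x_i = p_i^\star$ (so $\delta_{ij} = 0$) makes every bracket vanish, giving $\sigma_F^2 = 0$, which is the global minimum since the stress is non-negative. Conversely, any minimiser must make each bracket zero: because $\omega\neq 0$ gives $\alpha > 0$, the penalty forces $\delta_{ij} = 0$ for all pairs, i.e.\ all $s_i$ coincide and the embedding lies in a single hyperplane orthogonal to $\omega$; the first summand then forces $r_{ij} = \bD_{i,j}$, which is exactly the condition defining the traditional (zero-stress) MDS embedding inside that hyperplane.

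I expect the only delicate point to be bookkeeping rather than depth: I must state the standing assumption of symmetric, strictly positive weights (uniform weights being the canonical case) so that the cross-term cancellation is valid and so that $\delta_{ij} = 0$ propagates to \emph{all} pairs; without connectivity of the positive-weight graph one could only conclude coplanarity within each component. A cleaner alternative that sidesteps weight symmetry is to argue directly at a minimiser: since zero stress is attainable, at the optimum $d_{F^C}(x_i,x_j) = \bD_{i,j}$ and $d_{F^C}(x_j,x_i) = \bD_{i,j}$ hold simultaneously, and adding these yields $\norm{x_j - x_i}_2 = \bD_{i,j}$ while subtracting them yields $\omega^\top(x_j - x_i) = 0$, giving the hyperplane constraint and the exact Euclidean embedding at once. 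Finally I would note that the minimiser is unique only up to the expected invariances — rigid motions within the hyperplane and an arbitrary common translation along $\omega$ — which is the precise sense in which it coincides with the traditional MDS solution.
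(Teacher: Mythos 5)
Your proof is correct, and it takes a genuinely different --- and more complete --- route than the paper's. The paper's own proof is purely a verification of existence: it takes the assumed Euclidean embedding $\bX$, appends a zero coordinate along $\omega$ to get $[\bX,0]\in\mathbb{R}^{N\times(m+1)}$, observes that for points in a hyperplane orthogonal to $\omega$ the Randers distance reduces to the Euclidean one, and concludes that this configuration attains zero stress and is therefore a global minimiser. It never addresses why a minimiser \emph{must} be of this form. Your decomposition $\sigma_F^2(\bX)=2\sum_{i<j}w_{i,j}\bigl[(\sqrt{r_{ij}^2+\delta_{ij}^2}-\bD_{i,j})^2+\alpha^2\delta_{ij}^2\bigr]$ (valid under symmetric weights, after pairing $(i,j)$ with $(j,i)$ so the linear drift cross-terms cancel) buys exactly that converse: with $\alpha>0$ the penalty forces $\delta_{ij}=0$ on every positively weighted pair and the remaining term forces $r_{ij}=\bD_{i,j}$, so every minimiser is the traditional MDS embedding in a hyperplane orthogonal to $\omega$, up to rigid motions in the hyperplane and a common translation along $\omega$. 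Your add/subtract argument at a zero-stress optimum ($\norm{x_j-x_i}_2=\bD_{i,j}$ and $\omega^\top(x_j-x_i)=0$) is an even lighter way to get the same necessity. The caveats you flag --- strictly positive (or at least connected) weights so that $\delta_{ij}=0$ propagates globally --- are real and are implicitly assumed but not stated by the paper either; since the theorem as phrased asserts what \emph{the} solution is, your argument actually proves the statement as written, whereas the paper's proof only establishes half of it.
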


A proof is provided in \cref{sec: proof of canonical randers generalises riemann with extra dim}. 
\Cref{th: canonical randers generalises riemann with extra dim} implies that the canonical Randers space can thus be seen as an extension of the Euclidean space, adding an extra dimension to encode asymmetry.
Within hyperplanes orthogonal to this direction, the metric remains Euclidean. When there is no asymmetry, Finsler MDS preserves the Euclidean embedding in the $m$-dimensional hyperplane.

\section{Solving Finsler Multi-Dimensional Scaling}
\label{sec: solving FMDS}

We first solve the Finsler MDS problem by revisiting the SMACOF algorithm for the canonical Randers metric (\cref{subsec:fsmacof}).
In addition, we show how to adapt the Finsler MDS problem to modern deep learning for node embedding and link prediction in digraphs (\cref{subsec:finsler_embedded_learning}).
These two methods show the simplicity and versatility of Finsler MDS.

\subsection{Finsler SMACOF}\label{subsec:fsmacof}

The traditional MDS task in \cref{eq:MDS} is a non-linear least-squares problem that can be solved with the SMACOF algorithm \cite{borg2007modern}. We propose to extend this approach for solving the Finsler MDS task in \cref{eq:fmds}. We call this method the \emph{Finsler SMACOF algorithm}. 

Denoting $W$ the weight matrix with entries $w_{ij}$, we assume a symmetric weighting scheme $W=W^\top$, even if the data dissimilarities $\boldsymbol{D}$ are not. This is a natural assumption meaning that the $(i,j)$ and $(j,i)$ pairs are as important for the embedding.
We can then rewrite \cref{eq:fmds} as
\begin{align}
    \sigma_F^{2}(\boldsymbol{X}) &= \tr(\boldsymbol{X}^{\top}V\boldsymbol{X}) + \tr(\boldsymbol{X}^{\top}V\boldsymbol{X}\omega \omega^{\top})  \nonumber\\ 
    &\hspace{2em}+ 2 \tr((W\odot \boldsymbol{D}-W^\top \odot \boldsymbol{D}^\top)\mathbbm{1}_{m}\omega^{\top} \boldsymbol{X}^{\top}) \nonumber\\
    &\hspace{2em}- 2 \tr(\boldsymbol{X}^{\top}B(\boldsymbol{X})\boldsymbol{X}) + \tr(\boldsymbol{D}W\boldsymbol{D}^{\top}) \,,
    \label{eq:sigma2 traces}
\end{align}
where $\tr$ is the trace operator, $\odot$ is the Hadamard product, $\mathbbm{1}_{m}\in\mathbb{R}^m$ is the vector with all entries equal to $1$, and
\begin{align}
    \label{eq:V}
    V_{ij} &=
        \begin{cases}
            -w_{ij} & \text{ if } i \neq j, \\
            \sum_{k\neq i} w_{ik} & \text{ if } i = j.
        \end{cases}
        \\
    \label{eq:B}
    B_{ij}(\boldsymbol{X}) &=     
    \begin{cases}
         - w_{ij}\frac{\boldsymbol{D}_{i,j}}{d_{F^C}(x_i, x_j)}& \text{if } i \neq j
         \\
         -\sum_{k\neq i}B_{ik}(\boldsymbol{X})& \text{if } i = j.
    \end{cases}
\end{align}

See \cref{sec:reformulation of fmds} for a detailed derivation.
Denote $\vect(A)$ the vectorisation of matrix $A$ by stacking its columns, $A^\dagger$ the Moore-Penrose pseudo-inverse of $A$, and $\otimes$ the Kronecker product. Let $K\in\mathbb{R}^{mN\times mN}$ and $C\in\mathbb{R}^{N\times m}$ be 
$K = (I_m + \omega\omega^\top) \otimes V $ and 
$C = (W\odot \boldsymbol{D}-W^\top \odot \boldsymbol{D}^\top)\mathbbm{1}_{m}\omega^{\top}$.
By deriving first-order optimality conditions, 
$\sigma_F$ can be iteratively minimised. 
Denoting $\boldsymbol{X}^{(k)}$ the $k$-th 
iterate of the embedding,
we have the following. 

\begin{proposition}[Finsler SMACOF]\label{prop:FSMACOF}
    A local minimum of the Finsler stress 
    with symmetric weights is reached by iterating
    $$ \vect\big(\boldsymbol{X}^{(k+1)}\big) = \\ K^{\dagger} \vect\Big(B\big(\boldsymbol{X}^{(k)}\big)\boldsymbol{X}^{(k)} - C \Big).$$
\end{proposition}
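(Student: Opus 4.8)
The plan is to prove \cref{prop:FSMACOF} by a majorisation--minimisation (MM) argument that lifts the classical SMACOF analysis to the canonical Randers metric. I would begin from the exact trace reformulation of the Finsler stress in \cref{eq:sigma2 traces}. The key structural fact, which is exactly what the symmetric weighting $W = W^\top$ buys us, is that the only genuinely asymmetric piece produced by expanding $d_{F^C}(x_i,x_j)^2 = (\norm{x_j - x_i} + \omega^\top(x_j - x_i))^2$, namely the cross term $\sum_{i,j} w_{ij}\,\norm{x_j - x_i}\,\omega^\top(x_j - x_i)$, cancels identically: pairing $(i,j)$ with $(j,i)$, the Euclidean norm is even while $\omega^\top(x_j - x_i)$ is odd, so the two contributions annihilate. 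After this cancellation, \cref{eq:sigma2 traces} decomposes $\sigma_F^2$ into a quadratic part $\tr(\boldsymbol{X}^\top V \boldsymbol{X}) + \tr(\boldsymbol{X}^\top V \boldsymbol{X}\omega\omega^\top)$, a linear part carried by $C$, a single non-quadratic term $-2\tr(\boldsymbol{X}^\top B(\boldsymbol{X})\boldsymbol{X})$ representing $-2\sum_{i,j} w_{ij}\boldsymbol{D}_{i,j}\norm{x_j - x_i}$, and an additive constant. Using $\tr(\boldsymbol{X}^\top V \boldsymbol{X} M) = \vect(\boldsymbol{X})^\top (M \otimes V)\,\vect(\boldsymbol{X})$ for symmetric $M$, the quadratic part is precisely $\vect(\boldsymbol{X})^\top K\,\vect(\boldsymbol{X})$ with $K = (I_m + \omega\omega^\top)\otimes V$.

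The only term needing majorisation is the non-quadratic one, which has exactly the form encountered in Euclidean MDS. Here I would invoke the standard SMACOF inequality $\tr(\boldsymbol{X}^\top B(\boldsymbol{X})\boldsymbol{X}) \ge \tr(\boldsymbol{X}^\top B(\boldsymbol{Z})\boldsymbol{Z})$, with equality at $\boldsymbol{X} = \boldsymbol{Z}$, whose coefficients are collected in the Guttman-type matrix $B(\boldsymbol{Z})$ of \cref{eq:B}. This inequality is itself a consequence of Cauchy--Schwarz applied pairwise, $\norm{x_j - x_i} \ge (x_j - x_i)^\top(z_j - z_i)/\norm{z_j - z_i}$, together with $w_{ij},\boldsymbol{D}_{i,j}\ge 0$. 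Freezing $B$ at the current iterate therefore upper-bounds $-2\tr(\boldsymbol{X}^\top B(\boldsymbol{X})\boldsymbol{X})$ by $-2\tr(\boldsymbol{X}^\top B(\boldsymbol{Z})\boldsymbol{Z})$ and yields a majoriser $\tau(\boldsymbol{X};\boldsymbol{Z})$ that is a convex quadratic in $\vect(\boldsymbol{X})$ (since $V$, and hence $K$, is positive semidefinite), satisfies $\sigma_F^2(\boldsymbol{X}) \le \tau(\boldsymbol{X};\boldsymbol{Z})$ everywhere, and touches the stress at $\boldsymbol{X} = \boldsymbol{Z}$.

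It then remains to minimise $\tau(\cdot;\boldsymbol{Z})$ in closed form and set $\boldsymbol{Z} = \boldsymbol{X}^{(k)}$. Vectorising and collecting terms gives $\tau(\boldsymbol{X};\boldsymbol{Z}) = \vect(\boldsymbol{X})^\top K\,\vect(\boldsymbol{X}) - 2\,\vect(B(\boldsymbol{Z})\boldsymbol{Z} - C)^\top\vect(\boldsymbol{X}) + \mathrm{const}$, whose first-order optimality condition is the linear system $K\,\vect(\boldsymbol{X}) = \vect(B(\boldsymbol{Z})\boldsymbol{Z} - C)$. Solving with the Moore--Penrose pseudo-inverse recovers exactly the claimed update $\vect(\boldsymbol{X}^{(k+1)}) = K^\dagger \vect(B(\boldsymbol{X}^{(k)})\boldsymbol{X}^{(k)} - C)$. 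Convergence to a local minimum then follows from the usual MM sandwich $\sigma_F^2(\boldsymbol{X}^{(k+1)}) \le \tau(\boldsymbol{X}^{(k+1)};\boldsymbol{X}^{(k)}) \le \tau(\boldsymbol{X}^{(k)};\boldsymbol{X}^{(k)}) = \sigma_F^2(\boldsymbol{X}^{(k)})$: the stress is non-increasing and bounded below, hence convergent, and the fixed points of the iteration are stationary points of $\sigma_F^2$.

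I expect the pseudo-inverse step to be the main obstacle, rather than the majorisation, which is classical. Because $V$ is a centred, Laplacian-type matrix it is singular, its kernel being spanned by $\mathbbm{1}_N$ and reflecting the translation invariance of $\sigma_F^2$; consequently $K$ is singular, and I must verify that the normal equations are \emph{consistent}, i.e.\ that $\vect(B(\boldsymbol{Z})\boldsymbol{Z} - C)$ lies in the range of $K$, so that $K^\dagger$ returns a true minimiser of the majoriser and not merely a least-squares surrogate. This is where some care is needed: since $B(\boldsymbol{Z})$ has zero row and column sums by construction and $W\odot\boldsymbol{D} - W^\top\odot\boldsymbol{D}^\top$ is antisymmetric (so its $\mathbbm{1}_N$-contraction vanishes), every column of $B(\boldsymbol{Z})\boldsymbol{Z} - C$ sums to zero and thus lies in $\mathrm{range}(V) = \mathbbm{1}_N^\perp$; since $I_m + \omega\omega^\top$ is invertible (indeed $I_m + \omega\omega^\top \succeq I_m$ for any $\omega$, irrespective of $\alpha = \norm{\omega}_2 < 1$), this places $\vect(B(\boldsymbol{Z})\boldsymbol{Z}-C)$ in $\mathrm{range}(K)$ and guarantees consistency. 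The remaining routine points are the non-smoothness of $B$ at coincident points $z_i = z_j$, handled as in classical SMACOF, and the standard limit-point argument identifying the fixed points of the Guttman map with stationary points of the stress.
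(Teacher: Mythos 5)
Your proof follows essentially the same route as the paper's: the trace reformulation of \cref{eq:sigma2 traces}, the Cauchy--Schwarz majorisation $\tr(\boldsymbol{X}^{\top}B(\boldsymbol{X})\boldsymbol{X}) \ge \tr(\boldsymbol{X}^{\top}B(\boldsymbol{Y})\boldsymbol{Y})$, first-order optimality of the quadratic majoriser, the Kronecker vectorisation identity, and the MM sandwich argument. Your added verification that the singular normal equations are consistent is a genuine strengthening that the paper omits, but one step of it is imprecise: in the Finsler setting $B(\boldsymbol{Z})$ is \emph{not} symmetric (since $\boldsymbol{D}$ and $d_{F^C}$ are asymmetric even though $W$ is symmetric), so its column sums do not vanish ``by construction'' from the zero-row-sum normalisation $B_{ii} = -\sum_{k\neq i}B_{ik}$; the needed fact $\mathbbm{1}_N^{\top}B(\boldsymbol{Z})\boldsymbol{Z} = 0$ is instead recovered by noting that only the symmetrised part of $B(\boldsymbol{Z})$ enters $\tr(\boldsymbol{X}^{\top}B(\boldsymbol{Z})\boldsymbol{Z})$, or more simply that the majoriser is bounded below (it dominates $\sigma_F^2$ up to an additive constant), which forces consistency of a convex quadratic's stationarity system.
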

We refer the reader to \cref{sec:proof of fsmacof} for a proof.
Note that when $\omega \equiv 0$, we retrieve the iterates of the traditional SMACOF algorithm \cite{borg2007modern}: 
$\boldsymbol{X}^{(k+1)} = V^\dagger B\big(\boldsymbol{X}^{(k)}\big)\boldsymbol{X}^{(k)}$.

\subsection{Finsler Representation Learning}
\label{subsec:finsler_embedded_learning}

To scale our approach, we present a Finsler representation learning method leveraging modern deep learning. This approach enables the canonical Randers embedding to be learned directly from data through neural networks, allowing applications to large digraph datasets \cite{jure2014snap}. 
Rather than training on $\sigma_F$, we adopt 
it as an evaluation metric, optimizing our model with alternative objectives while still maintaining effective embeddings.
Such replacements of the stress with alternative losses are standard in the field.

To bridge Finsler MDS with Finsler representation learning for graph tasks, we introduce the concept of a \emph{digraph} (directed graph) and its relationship to an asymmetrical pairwise distance matrix.
Let $\mathcal{G}= (\mathcal{V}, \mathcal{E}, \mathbf{A})$ be a digraph, where $\mathcal{V}$ is the set of $\abs{\mathcal{V}} = N$ nodes, $\mathcal{E}
\subset
\mathcal{V} \times \mathcal{V}$ represents directed edges, and $\mathbf{A} \in \mathbb{R}^{N \times N}$ is the weighted adjacency matrix. 
The directed graph 
distance
$d_{\mathcal{G}}$
is the length of shortest paths on $\mathcal{G}$. 
This results in an asymmetrical pairwise distance matrix $\mathbf{D}$, where $\mathbf{D}_{i,j} = d_{\mathcal{G}}(i, j)$ for nodes $i, j \in \mathcal{V}$, inherently satisfying $\mathbf{D}^\top \neq \mathbf{D}$ due to the directed nature of $\mathcal{G}$. 
 It aligns with our asymmetrical setup in Finsler MDS (see \cref{sec:Finsler Multi-Dimensional Scaling}). 
Given a digraph $\mathcal{G}$, we aim to represent each node $i \in \mathcal{V}$ by a $m$-dimensional Finsler embedding $x_i\in\mathbb{R}^m$ such that the stress function in  Eq.~\eqref{eq:fmds} is minimised and the learned embedding supports various graph generalisations,  such as link prediction. 

To learn the canonical Randers embedding, we use the Fermi-Dirac decoder \cite{krioukov2010hyperbolic, nickel2017poincare} to model
probability scores of directed edges, given by 
\begin{equation}
    p\big( ( i, j ) \in \mathcal{E} | \bX \big) = \frac{1}{1 + \exp\left(
    \tfrac{d_{F^C}^2(x_i, x_j) - r}{t} \right) },
\end{equation}
with $d_{F^C}$ is the canonical Randers distance, and $r, t >0$ are hyperparameters. 
We train the Finsler embeddings on the weighted cross-entropy loss with negative sampling \cite{goldberg2014word2vec}.

\section{Experimental Results}
\label{sec:applications}

Our new problem and its resolution open new perspectives, as we demonstrate with two types of applications.
The first is data visualisation via low dimensional embeddings (\cref{subsec:data_visualisation}).
The second is the representation learning
for node embedding and link prediction for directed graphs (\cref{sec:digraph_embedding}), which are prevalent tasks in graph theory.
The implementation details, including experimental setup, dataset details, and hyperparameters, along with additional results can be found in
\cref{sec: additional details and res experiments}.

\subsection{Data Visualisation}\label{subsec:data_visualisation}

We demonstrate not only how
our method is capable of handling asymmetric input dissimilarities, but also how the canonical Randers embedding space provides an intuitive asymmetric embedding space suitable for visualisations. We show on several types of synthetic data the strength of our method for many visualisation applications.

The drift component of the canonical Randers embedding space $\omega$ is chosen to be along the last axis, i.e.\ $\vec{z}$ (resp. $\vec{y}$) in 3D (resp. 2D).
The distance between two points going upwards is then greater than the downwards distance, leading to intuitive visualisations. Results are shown using the Finsler SMACOF algorithm (\cref{subsec:fsmacof}) on data dissimilarities computed from geodesic distances via Dijkstra's algorithm \cite{dijkstra1959note}. Full details are provided in \cref{sec: visualisation exp implementation considerations}.

\begin{figure}[ht]
  \centering
   \includegraphics[width=\columnwidth]{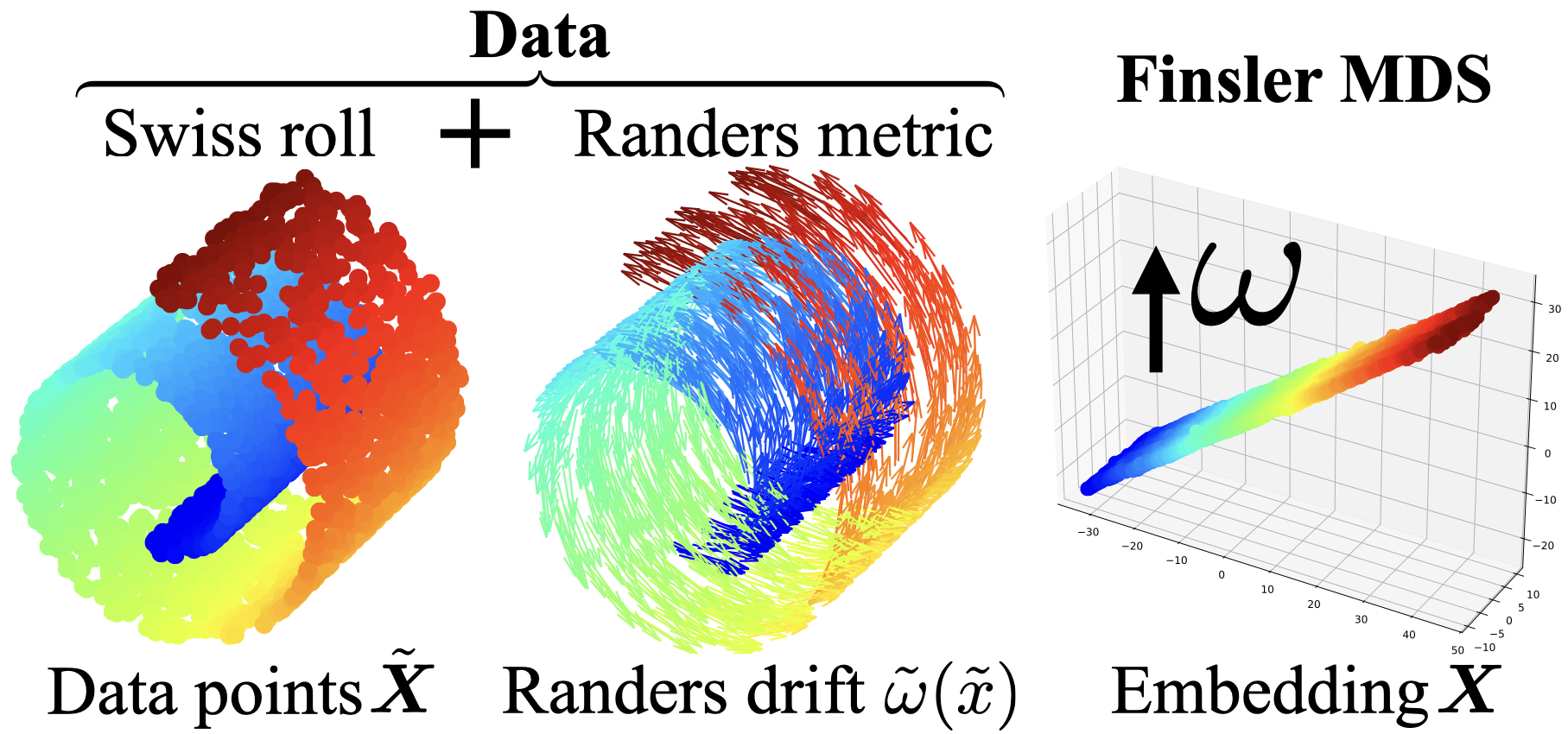}
   \caption{Flattening the Swiss roll (left) equipped with a Randers metric (middle) in the 3D canonical Randers space with Finsler MDS (right).
   The plotted arrows on the manifold are the linear drift components $\tilde{\omega}$ of the Randers metric. 
   While Finsler MDS manages to flatten the Swiss roll, it preserves the asymmetry along $\omega$, 
   e.g.\ the height difference between blue and red points implies asymmetric distances between them.
   }
   \label{fig: swiss role single alpha no isomap}
   \vspace{-1em}
\end{figure}

\hfill \break 
\noindent \textbf{Asymmetric Manifold Flattening.}
We flatten the Swiss roll $\tilde{\mathcal{X}}\in\mathbb{R}^3$, a reference shape in Manifold learning. To make it asymmetric, we equip $\tilde{\mathcal{X}}$ with the Randers metric $\tilde{F}^{\tilde{\alpha}}$ given by $\tilde{F}_{\tilde{x}}^{\tilde{\alpha}}(u) = \lVert u\rVert_2 + \tilde{\alpha} \tilde{\omega}^\top u$, with chosen 
$\tilde{\omega}\in \mathcal{T}_{\tilde{x}}\tilde{\mathcal{X}}$ having
$\lVert\tilde{\omega}\rVert_2=1$ and hyperparameter $\tilde{\alpha}<1$ controlling the input amount of asymmetry on $\tilde{\mathcal{X}}$.
We plot in \cref{fig: swiss role single alpha no isomap} the Finsler Smacof embedding to the canonical Randers space $\mathcal{X}=\mathbb{R}^3$, which is well-suited for embedding two-dimensional asymmetric manifolds (see \cref{th: canonical randers generalises riemann with extra dim}). 
For more embeddings with different levels of input asymmetry $\tilde{\alpha}$, and in particular an illustration of \cref{th: canonical randers generalises riemann with extra dim} when data is symmetric $\tilde{\alpha}=0$, see \cref{sec: additional results visualisation exp}.
Our Finsler embedding demonstrates both the intuitive flat structure of the Swiss roll $\tilde{\mathcal{X}}$ and its asymmetry between points. 
In fact, our visualisation is significantly clearer than when artificially 
plotting arrows on top of the manifold $\tilde{\mathcal{X}}$, as the asymmetry can be difficult to discern for numerous samples or
low magnitudes of the Randers drift component.
To the best of our knowledge, our method is the only metric approach capable of naturally revealing and preserving 
these asymmetric
structures, thanks to Finsler geometry.

\begin{figure}[ht]
  \centering
   \includegraphics[width=\columnwidth]{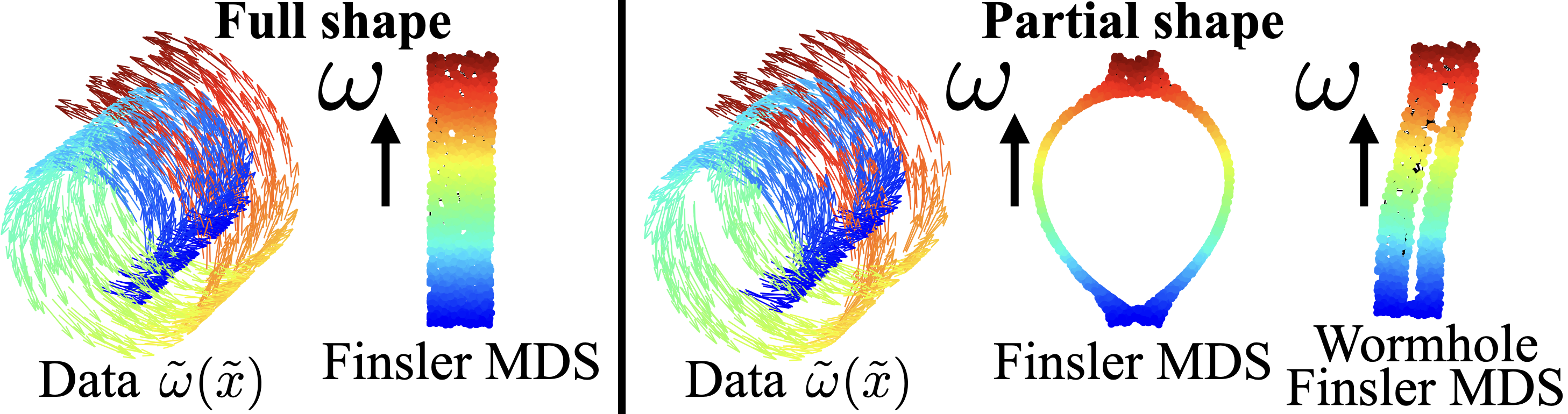}
   \caption{Swiss roll embeddings to the canonical Randers space $\mathbb{R}^2$. Our Finsler MDS can generalise current SOTA approaches for providing embeddings robust to missing parts.
   Note that Finsler MDS accurately embeds the Swiss roll to $\mathbb{R}^2$, but Isomap would not be able to accurately embed the symmetric version to $\mathbb{R}^1$.
   }
   \label{fig: swiss roll hole just randers}
   \vspace{-1em}
\end{figure}

\hfill \break
\noindent \textbf{Robustness to Holes.}
Most Manifold learning approaches suffer from topological changes
due to missing parts.
To increase robustness, the weights $w_{i,j}$ of the stress can be tuned to filter out perturbed data dissimilarities, by focusing on local \cite{schwartz2019intrinsic} or non-local criteria \cite{rosman2010nonlinear,bracha2024wormhole}.
Our Finsler MDS is compatible with such approaches.
We propose a Finsler wormhole criterion, generalising the SOTA non-heuristic criterion for symmetric distances \cite{bracha2024wormhole}, and prove that it recovers consistent geodesic pairs on a Finsler manifold in \cref{sec: wormhole Finsler MDS}. 
We showcase in \cref{fig: swiss roll hole just randers} the strength of the Finsler wormhole criterion to provide robust embeddings preserving the geometry of the underlying Finsler manifold.

\begin{figure}[ht]
  \centering
    \includegraphics[width=\columnwidth]{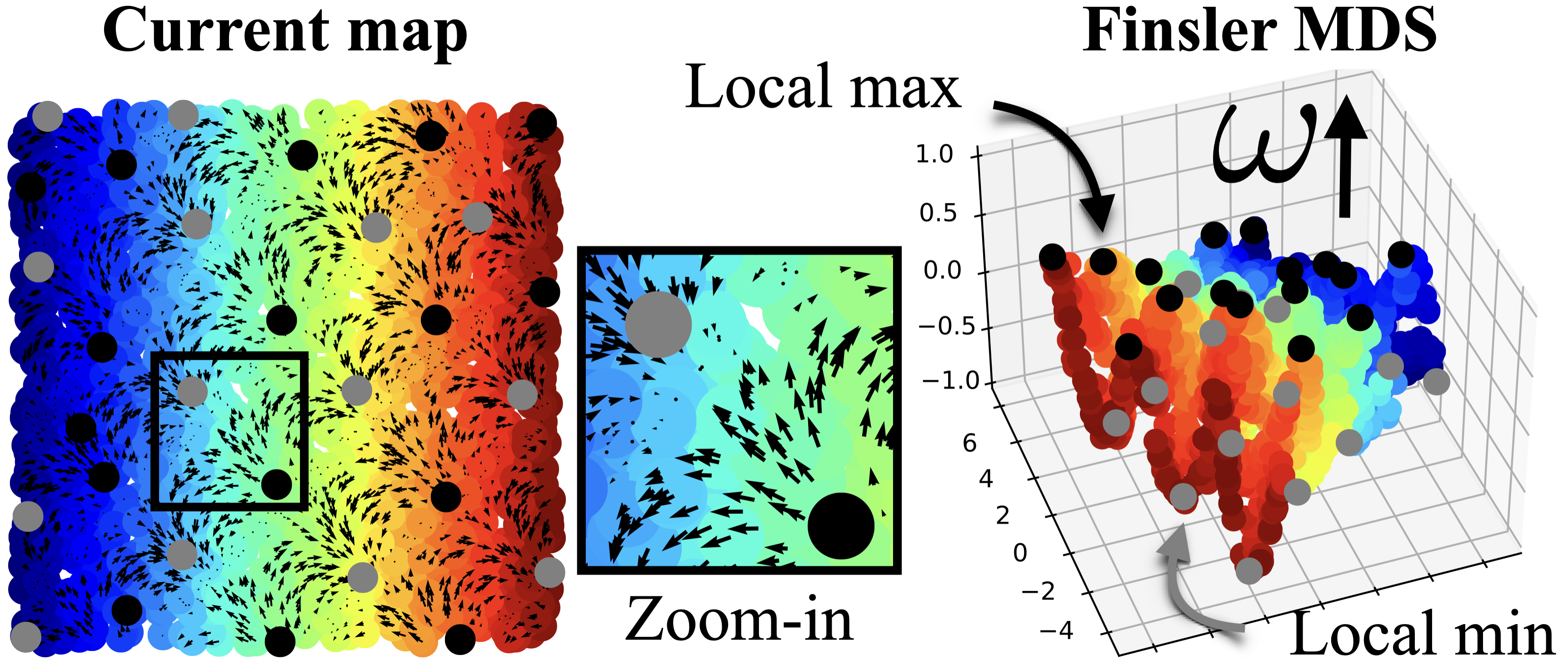}
    \caption{
    Unflattening a current map (left) by Finsler MDS embedding to the 3D canonical Randers space (right). Dissimilarities are shortest-time distances
    given the current. 
    The 3D map reveals the asymmetry, where timewise distances are easy to read based on the measurements on the straight Euclidean 3D line between points, and with local maxima (resp. minima), plotted in black (resp. gray), corresponding to source (resp. sink) points.
   }
   \label{fig: current maps sea}
   \vspace{-1em}
\end{figure}

\hfill \break
\noindent 
\textbf{Unflattening Current Maps.}
We aim to visualise current maps such as surface currents in a sea. 
The data points lie on the 2D plane $\tilde{\mathcal{X}}\subset\mathbb{R}^2$, 
with current at their location $\tilde{v}(\tilde{x})\in \mathcal{T}_{\tilde{x}}\tilde{\mathcal{X}} = \mathbb{R}^2$.
The 2D map displays the correct spatial flatness of the surface, and it can be used to measure spatial distances. 
However, due to complex non-uniform currents $\tilde{v}$, the time-shortest paths for a boat with a fixed speed motor will be particularly complex and we cannot measure on the 2D map the time of these journeys.
In fact, these times are given by the Zermelo metric $\tilde{F}$, which is Randers, that we approximate here as $\tilde{F}_x(u) = \lVert u\rVert_2 - \tilde{v}(\tilde{x})^\top u$
(see \cref{sec: current vs randers}).
We propose to embed the data into the 3D canonical Randers space $\mathcal{X}=\mathbb{R}^3$, yielding a 3D map where height levels represent asymmetry in shortest-time travel (see 
\cref{fig: current maps sea} and \cref{sec: additional results visualisation exp}).
Shortest travel time is then easily measurable (see \cref{prop:randers_distance}) 
from Euclidean segments.
Additionally, extracting local extrema from the embedding space yields the source and sink points of the current. 
The curved Finsler embedding is thus intuitive.

\begin{figure}[ht]
  \centering
    \includegraphics[width=\columnwidth]{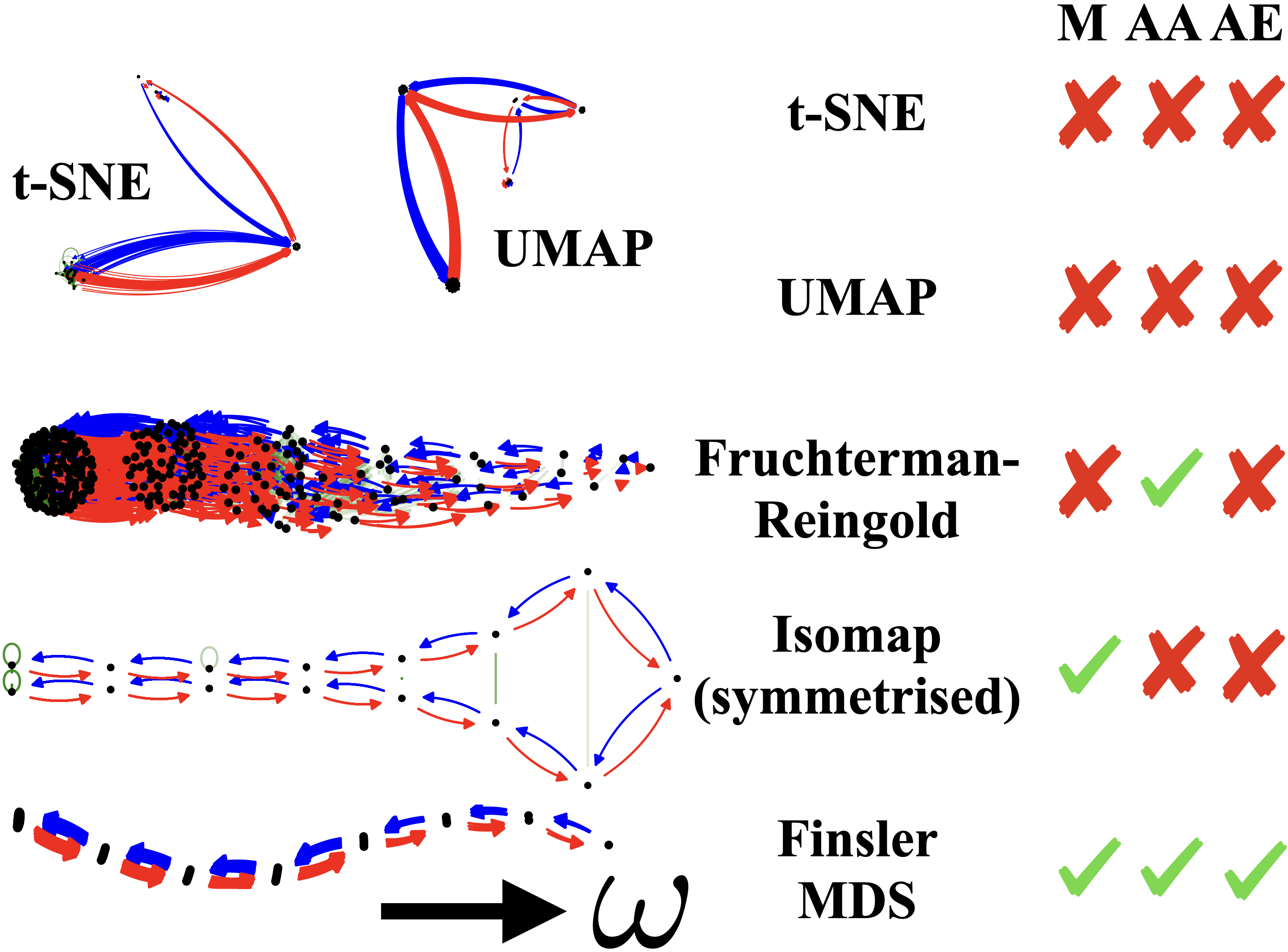}
   \caption{
    Manually rotated embeddings of modified directed binary trees of various depth, with small symmetric edges between equally deep nodes.
    The reference works, t-SNE \cite{van2008visualizing} and UMAP \cite{mcinnes2018umap} fail to provide a meaningful embedding.
    The 
    Fruchterman-Reingold \cite{fruchterman1991graph} algorithm
    fails to clearly reveal the hierarchy, 
    unlike
    Isomap \cite{schwartz1989numerical,tenenbaum2000global}. The latter heavily distorts
    distances between equally deep nodes, due to node collapse to two modes and large separations at the top of the structure.
    Neither method provides the direction of the hierarchy.
    In contrast, the Finsler MDS embedding clearly reveals the hierarchy and its direction by construction while highly preserving all distances. 
    The symbols mean: M -- Metric, AA -- Asymmetric algorithm, and AE -- Asymmetric Embedding.
   }
   \label{fig: binary tree one}
   \vspace{-1em}
\end{figure}

\hfill \break 
\noindent \textbf{Revealing Graph Hierarchies.}
Directed graphs are common and asymmetric, yet lack any embedding.
In this experiment, we aim to reveal the node hierarchy by embedding directed graphs into a 2D canonical Randers space.
Nodes embedded higher in $\vec{y}$ have greater importance than those embedded below.
Our synthetic graph is a modified directed binary tree. 
Children are close from their parents but not vice versa,  and small symmetric edges have been added between nodes at the same depth. 
This simulates a proximity graph in a pyramidal company between managers (parents) and employees (children). 
The Finsler embedding (see \cref{fig: binary tree one}) naturally reveals the graph hierarchy between nodes, with nodes at different depths embedded to different heights. 
In contrast, the traditional symmetrised MDS embedding collapses and the proximity between nodes at the same depth is not consistent throughout the graph.

\subsection{Digraph Embedding and Link Prediction}
\label{sec:digraph_embedding}
We demonstrate the power of the proposed Finsler representation learning (\cref{subsec:finsler_embedded_learning}) for digraph embedding and 
link prediction
tasks \cite{ou2016asymmetric, perrault2011directed}. 
Various approaches have been proposed to preserve the asymmetry information in graph data \cite{zhou2004semi, zhou2005learning, chung2006diameter, fanuel2017magnetic, zhang2021magnet, koke2023holonets, rossi2024edge}, capturing essential properties of directed structures. These methods explore ways to encode the directionality inherent in directed graphs, often through adjustments in learning algorithms or spectral properties that reflect asymmetrical relationships. Nevertheless, to the best of our knowledge, none of these approaches have considered Finsler geometry for digraphs. 
%

\begin{table*}[t]
\Huge
\centering
\resizebox{\textwidth}{!}{%
\begin{tabular}{lccccccccccccccccccccccccccr}
\toprule
 & & & \multicolumn{23}{c}{Dimensionality}\\
 \cmidrule{4-26} 
     & & & \multicolumn{5}{c}{2} & &\multicolumn{5}{c}{5} & &\multicolumn{5}{c}{10} & &\multicolumn{5}{c}{50} \\
     \cmidrule{4-8} \cmidrule{10-14} \cmidrule{16-20} \cmidrule{22-26}
     & & & \multicolumn{2}{c}{Distortion $(\downarrow)$} & & \multicolumn{2}{c}{MAP $(\uparrow)$} & & \multicolumn{2}{c}{Distortion $(\downarrow)$} & & \multicolumn{2}{c}{MAP $(\uparrow)$} & & \multicolumn{2}{c}{Distortion $(\downarrow)$} & & \multicolumn{2}{c}{MAP $(\uparrow)$} & & \multicolumn{2}{c}{Distortion $(\downarrow)$} & & \multicolumn{2}{c}{MAP $(\uparrow)$} & 
     \\
     \cmidrule{4-5} \cmidrule{7-8} \cmidrule{10-11} \cmidrule{13-14} \cmidrule{16-17} \cmidrule{19-20} \cmidrule{22-23} \cmidrule{25-26}
     & & & Euclidean & Finsler && Euclidean & Finsler 
     & & Euclidean & Finsler && Euclidean & Finsler 
     & & Euclidean & Finsler && Euclidean & Finsler
     & & Euclidean & Finsler && Euclidean & Finsler \\
\midrule
& Cora & & 0.426 & \textbf{0.127} && 0.574 & \textbf{0.894} 
& & 0.414 & \textbf{0.106} && 0.612 & \textbf{0.922} 
& & 0.322 & \textbf{0.093} && 0.677 & \textbf{0.931} 
& & 0.218 & \textbf{0.091} && 0.831 & \textbf{0.949} \\
\midrule
& Citeseer & & 0.386 & \textbf{0.194} && 0.394 & \textbf{0.691} 
& & 0.320 & \textbf{0.172} && 0.495 & \textbf{0.835} 
& & 0.305 & \textbf{0.083} && 0.681 & \textbf{0.958} 
& & 0.277 & \textbf{0.051} && 0.740 & \textbf{0.976} \\
\midrule
& Gr-QC & & 0.247 & \textbf{0.012} && 0.289 & \textbf{0.986} 
& & 0.213 & \textbf{0.011} && 0.384 & \textbf{0.994} 
& & 0.209 & \textbf{0.010} && 0.424 & \textbf{0.999} 
& & 0.192 & \textbf{0.010} && 0.487 & \textbf{0.999} \\
\midrule
& Chameleon & & 0.483 & \textbf{0.261} && 0.488 & \textbf{0.798} 
& & 0.450 & \textbf{0.129} && 0.502 & \textbf{0.849} 
& & 0.397 & \textbf{0.116} && 0.594 & \textbf{0.903} 
& & 0.376 & \textbf{0.102} && 0.628 & \textbf{0.924} \\
\midrule
& Squirrel & & 0.592 & \textbf{0.226} && 0.395 & \textbf{0.812} 
& & 0.514 & \textbf{0.201} && 0.465 & \textbf{0.837} 
& & 0.423 & \textbf{0.139} && 0.541 & \textbf{0.994} 
& & 0.388 & \textbf{0.130} && 0.607 & \textbf{0.996} \\
\midrule
& Arxiv-Year & & 0.489 & \textbf{0.242} && 0.321 & \textbf{0.742} 
& & 0.408 & \textbf{0.195} && 0.392 & \textbf{0.853} 
& & 0.346 & \textbf{0.136} && 0.415 & \textbf{0.927} 
& & 0.329 & \textbf{0.115} && 0.431 & \textbf{0.954} \\
\bottomrule
\end{tabular} 
}
\caption{Distortion and Mean Average Precision (MAP) for asymmetric distance reconstruction for digraph data (the better in bold). 
We obtain consistently, and by a significant margin, less distortion and better accuracy when embedding in our Finsler space than in the Euclidean space. In particular, Finsler embeddings lead to better performance across all datasets and all embedding dimensions.
} 
\label{tab: digraph_embedding}
\end{table*}

\hfill \break
\noindent \textbf{Digraph Embedding for Asymmetric Distance Reconstruction.}
To assess the quality of learned Finsler embeddings, we first compare embedding results on the full graph, i.e.\ all nodes, using either the canonical Randers distance or the Euclidean distance in the embedding space $\mathbb{R}^m$ of various dimensions $m\in\{2,5,10,50\}$.
The reconstruction errors, analysed with respect to
$m$,
provide an indication of the strength of Finsler embeddings. 
To measure the performance of the learned embedding, we use two common metrics: (i) distortion, defined as the normalised stress 
$\bar{\sigma}_L(\boldsymbol{X}) = \begin{psmallmatrix} N \\ 2 \end{psmallmatrix}^{-1}\sigma_L(\boldsymbol{X})$, where with $L=F^C$ (resp. $E$) for the canonical Finsler (resp. Euclidean) embedding space, and weights $w_{i,j} = \boldsymbol{D}_{i,j}^{-1}$,
and (ii) Mean Average Precision (MAP) score.
We embed six digraphs: Cora \cite{sen2008collective}, Citeseer \cite{yang2016revisiting},  Gr-QC \cite{leskovec2007graph}, Chameleon \cite{rozemberczki2021multi}, Squirrel \cite{rozemberczki2021multi}, and Arxiv-Year \cite{hu2020open}. 
For details on these datasets, refer to \cref{subseec:digraph_embedding_and_link_prediction}.
We show the reconstruction performance in
\cref{tab: digraph_embedding}. 
Our results suggest that Finsler embeddings excel in representing digraphs. Indeed, compared to Euclidean representations, the associated distortions are consistently smaller and the accuracy higher, for all digraphs and by a significant margin. Moreover, Finsler embeddings show consistent performance across embedding dimensions $m$, providing a stable representation even in low dimensions, where they capture the asymmetrical structure effectively.
This makes them a practical choice for digraph embedding.

\begin{table*}[t]
\LARGE
\centering
\resizebox{\textwidth}{!}{%
\begin{tabular}{lccccccccccccccccccccccccccccr}
\toprule
 & \multicolumn{6}{c}{Existence Prediction}   & & \multicolumn{6}{c}{Direction Prediction}   \\ 
\cmidrule{2-7} \cmidrule{9-14} 
& Cora & Citeseer & Gr-Qc & Chameleon & Squirrel & Arxiv-Year &  & Cora & Citeseer & Gr-Qc & Chameleon & Squirrel& Arxiv-Year\\
\midrule
NERD  & 82.7$\pm$0.8 & 79.2$\pm$0.3 & 73.9$\pm$0.6  & 81.5$\pm$0.7  & 72.5$\pm$0.8  & 54.2$\pm$1.3 &  & 90.6$\pm$0.6 & 81.3$\pm$0.7  & 79.2$\pm$0.8  & 84.2$\pm$0.4  & 78.2$\pm$0.5  & 59.4$\pm$1.6    \\ 
DiGCN & 81.5$\pm$0.4 & 81.9$\pm$1.7  & 76.2$\pm$1.3  & 78.2$\pm$1.9  & 73.4$\pm$1.5  & 59.6$\pm$2.1 &  & 90.2$\pm$1.5 & 87.2$\pm$1.8 & 82.5$\pm$1.6  & 85.0$\pm$1.2  & 80.1$\pm$1.4  & 64.5$\pm$1.2 \\ 
MagNet & 84.2$\pm$0.9 & 87.5$\pm$0.9 & 79.4$\pm$1.5  & 83.2$\pm$0.7  & \textbf{82.6}$\pm$1.0  & \underline{65.8$\pm$1.3} &  & 93.2$\pm$0.4 & 94.9$\pm$0.8  & 87.3$\pm$0.6  & 89.7$\pm$1.0  & \underline{85.2$\pm$0.8}  & \underline{70.2$\pm$2.4}  \\
DiGAE & 81.8$\pm$0.5 & 85.3$\pm$2.8 & 74.8$\pm$2.1  & 75.4$\pm$1.8  & 71.9$\pm$0.9  & 60.2$\pm$1.6 &  & 87.6$\pm$0.7 & 80.6$\pm$2.7  & 83.2$\pm$1.8  & 83.6$\pm$1.4  & 81.3$\pm$1.5  & 62.5$\pm$1.3     \\
ODIN &  89.1$\pm$0.6 & 85.0$\pm$1.9 &  82.5$\pm$1.4  & 84.6$\pm$1.3 & 78.7$\pm$1.4  & 63.5$\pm$1.8 &  & \textbf{95.3}$\pm$0.5 & 93.2$\pm$0.7  & 87.0$\pm$0.9   & 90.3$\pm$0.8  & 83.7$\pm$1.2  & 68.9$\pm$1.4  \\ 
DUPLEX & \underline{95.0$\pm$0.2} & \textbf{97.2}$\pm$0.6 & \underline{83.1$\pm$0.3} & \underline{87.2$\pm$0.4} & \underline{82.5$\pm$0.6} & 64.7$\pm$1.2 & & \textbf{95.3}$\pm$0.3 & \textbf{97.9}$\pm$0.1 & \underline{90.7$\pm$0.1} & \underline{92.0$\pm$0.4} & 85.0$\pm$0.9 & 69.6$\pm$1.1\\
\midrule
Ours & \textbf{95.2}$\pm$0.3 & \underline{94.6$\pm$1.0} &  \textbf{84.9}$\pm$0.6  & \textbf{89.3}$\pm$0.8  & 82.3$\pm$0.3  & \textbf{68.6}$\pm$1.4 &  & \underline{95.1$\pm$0.2} & \underline{97.2$\pm$0.3}  & \textbf{92.6}$\pm$0.5  & \textbf{93.2}$\pm$0.4  & \textbf{89.0}$\pm$0.3  & \textbf{74.4}$\pm$1.8  \\ 
\bottomrule
\end{tabular} 
}
\caption{Area Under the ROC Curve (ROC AUC) for the link prediction tasks (the highest in bold and the second highest underlined).
For each task, our Finsler embedding largely outperforms existing methods for nearly all datasets, and is on par for the remaining dataset.
} 
\label{tab: link_prediction}
\vspace{-0.2em}
\end{table*}

\hfill \break
\noindent \textbf{Link Prediction.}
To assess the generalisation capability of the proposed Finsler embedding, we conduct two link prediction experiments: existence and direction prediction.
We compare our method with NERD \cite{khosla2020node}, DiGCN \cite{tong2020digraph}, MagNet \cite{zhang2021magnet}, DiGAE \cite{kollias2022directed}, ODIN \cite{yoo2023disentangling}, and DUPLEX \cite{ke2024duplex}. 
We conduct experiments on the same digraphs as in the embedding task. 
Here, we split the graph datasets into 80\% for training, 15\% for testing, and 5\% for validation. 
Performance is assessed by measuring the Area Under the ROC Curve (AUC).
We measure the link prediction quality by computing the average performance and standard deviation over 10 random splits.
We present in 
\cref{tab: link_prediction} the link prediction performance of our Finsler embedding and competing baselines. 
For the link existence prediction task, our method surpasses by a large margin competing methods on all datasets except Squirrel, where our method is competitive with the best one. In the link direction prediction task, the Finsler approach achieves the best performance by a significant margin on all datasets except Cora, where it is on par with the best one.
For both experiments, our method is either far superior to the baselines by several percentage points, or is comparable to the best baseline within $0.3\%$.
These results demonstrate the 
superior
performance and generalisation of our Finsler method across both tasks.

\section{Conclusion}

We propose a novel framework for multi-dimensional scaling, called \textit{Finsler MDS}. 
It extends traditional MDS by embedding data in a Finsler manifold rather than a Riemannian one.
By doing so, we naturally accommodate asymmetry in data dissimilarities, which is a notorious blind spot in traditional methods.
Inspired by the Euclidean space, we design a simple canonical Finsler space in which to embed asymmetric data, enabling intuitive and computationally efficient representations.
Through both theoretical analysis and empirical validation, we demonstrate that Finsler MDS achieves accurate embeddings and preserves essential structures in diverse datasets where asymmetric relationships are critical.
We experimentally showcase the ability of Finsler MDS to provide novel insights in a wide range of visualisation applications. 
We also demonstrate its contribution to the field of graph representation learning providing superior embeddings for node embedding and link prediction, 
areas where traditional methods fall short.
Additionally, we illustrate the flexibility of the Finsler MDS framework by incorporating modern tools from both the traditional MDS,
and the deep learning communities.
We hope that this framework will inspire further research in asymmetric manifold learning and the use of Finsler geometry in computer vision applications, 
opening pathways for richer data representations and deeper insights into the complex structures underlying high-dimensional asymmetric data, beyond the mainstream Riemannian geometry.

\noindent \textbf{Limitations and Future Work.} 
First, due to asymmetry, providing a closed-form solution of the Finsler MDS embedding is non-trivial. Indeed, the extension of the kernel-based solution for simple cases in Euclidean geometry is not straightforward. 
Instead, we rely on iterative descent strategies to find a local minimum.
Although Finsler MDS extends traditional MDS methods and their Euclidean space by allowing a asymmetric geometrical representation, it similarly leads to increasing distortion the more the data manifold departs from the chosen metric structure. 
Let us also mention that in the canonical Randers space, all the asymmetry is encoded in a single dimension. 
However, there might be applications where two independent semantic concepts create asymmetry, and then other Finsler metrics 
might be preferable
to generate disentangled asymmetric embeddings.
Finally, let us highlight that the encoding of asymmetry via Finsler geometry extends beyond our dimensionality reduction and embedding applications. Nevertheless, it has been largely understudied for practical applications in computer vision and deserves further attention.

\section*{Acknowledgments}
This work was supported by the ERC Advanced Grant "SIMULACRON" (agreement \#884679), by the Israel Science Foundation (ISF) (grant No. 3864/21), by ADRI – Advanced Defense Research Institute – Technion and by the Rosenblum Fund.


{
    \small
    \bibliographystyle{ieeenat_fullname}
    \bibliography{main}
}


\clearpage
\setcounter{page}{1}
\maketitlesupplementary
\appendix

\noindent 
This supplemental material is organized as follows:

\begin{itemize}

    \item 
    \textbf{\Cref{sec:proofs and derivations}} contains the proofs of \cref{th: canonical Randers space is flat,prop:randers_distance,th: canonical randers generalises riemann with extra dim,prop:FSMACOF} and the details of the derivations for the Finsler stress function (\cref{eq:sigma2 traces}).

    \item 
    \textbf{\Cref{sec: current vs randers,sec: wormhole Finsler MDS}} contain additional theoretical discussions. The former is dedicated to the link between current fields and Randers metrics, while the latter focuses on a generalisation of the Wormhole criterion to Finsler MDS to handle manifolds with missing parts.

    \item
    \textbf{\cref{sec: additional details and res experiments}} contains implementation details and additional experiments complementing the visualisation experiments in \cref{subsec:data_visualisation}
    and the digraph representation learning experiments in \cref{sec:digraph_embedding}.
    
\end{itemize}

\section{Proofs and Derivations}\label{sec:proofs and derivations}

\subsection{Proof of \cref{th: canonical Randers space is flat}}
\label{sec: proof of canonical Randers space is flat}

We here provide two proofs of this result. 
The first uses the Euler-Lagrange equation, a powerful and general tool in the calculus of variations. It can give some insights for generalisation to other metrics.
However, given the simplicity of the canonical Randers space, a quick and direct proof is also given.

\hfill \break \noindent \textbf{Euler-Lagrange.}
In calculus of variations, the Euler-Lagrange equation provides first order optimality necessary conditions on the solution of functionals involving functions $x(t)$ and their derivative $x'(t)$.

\begin{theorem}[Euler-Lagrange equation]
    If a functional of a smooth scalar function $x(t)$ is given by
    $\mathcal{L}(x) = \int_0^1 L(t,x(t), x'(t)) dt$, where $L$ is a positive smooth function, then the solution minimising the functional $\mathcal{L}$ satisfies the equation
    $$\frac{\partial \mathcal{L}}{\partial x} - \frac{d}{dt} \frac{\partial \mathcal{L}}{\partial x'} = 0.$$
\end{theorem}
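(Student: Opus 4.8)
The plan is to prove this classical first-order optimality condition by the standard variational (first-variation) argument. I would begin by fixing the boundary data: since we seek a minimiser among curves with prescribed endpoints, I restrict attention to perturbations that vanish at $t=0$ and $t=1$. Concretely, let $x$ be a minimiser and let $\eta$ be an arbitrary smooth function with $\eta(0)=\eta(1)=0$; I then consider the one-parameter family $x_\epsilon = x + \epsilon\eta$ and the scalar function $\Phi(\epsilon) = \mathcal{L}(x_\epsilon) = \int_0^1 L\big(t,\, x(t)+\epsilon\eta(t),\, x'(t)+\epsilon\eta'(t)\big)\,dt$. Because $x$ minimises $\mathcal{L}$ and each $x_\epsilon$ is admissible, $\epsilon=0$ is an interior minimum of $\Phi$, so the necessary condition $\Phi'(0)=0$ must hold.

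Next I would differentiate under the integral sign, justified by the smoothness of $L$, to obtain
$$\Phi'(0) = \int_0^1 \left( \frac{\partial L}{\partial x}\,\eta(t) + \frac{\partial L}{\partial x'}\,\eta'(t) \right) dt.$$
The key manipulation is integration by parts on the second term, transferring the derivative off of $\eta'$:
$$\int_0^1 \frac{\partial L}{\partial x'}\,\eta'(t)\,dt = \left[ \frac{\partial L}{\partial x'}\,\eta(t) \right]_0^1 - \int_0^1 \frac{d}{dt}\!\left( \frac{\partial L}{\partial x'} \right) \eta(t)\,dt.$$
The boundary term vanishes precisely because $\eta(0)=\eta(1)=0$, which is exactly where the choice of admissible perturbations pays off. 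Substituting back yields
$$0 = \Phi'(0) = \int_0^1 \left( \frac{\partial L}{\partial x} - \frac{d}{dt}\frac{\partial L}{\partial x'} \right) \eta(t)\,dt,$$
valid for every smooth $\eta$ vanishing at the endpoints.

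Finally, I would invoke the fundamental lemma of the calculus of variations: if a continuous function $g$ satisfies $\int_0^1 g(t)\eta(t)\,dt = 0$ for all smooth test functions $\eta$ with $\eta(0)=\eta(1)=0$, then $g\equiv 0$ on $[0,1]$. Applying this with $g = \frac{\partial L}{\partial x} - \frac{d}{dt}\frac{\partial L}{\partial x'}$ gives the Euler--Lagrange equation. I expect the main obstacle to lie not in the computation but in rigorously justifying this last step: one must argue that $g$ is continuous (which follows from the assumed smoothness of $L$ and of the minimiser) and prove the lemma, e.g.\ by choosing a bump function $\eta$ concentrated where $g\neq 0$ to force a contradiction. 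A secondary technical point worth flagging is the regularity presupposed of the minimiser, since forming $\frac{d}{dt}\frac{\partial L}{\partial x'}$ requires $x$ to be twice differentiable; this is typically settled by a preliminary du Bois-Reymond regularity argument, which I would mention but not belabour.
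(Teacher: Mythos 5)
Your argument is the standard and correct first-variation proof: perturb the minimiser by $x+\epsilon\eta$ with $\eta$ vanishing at the endpoints, differentiate under the integral, integrate by parts, and conclude via the fundamental lemma of the calculus of variations. The paper itself does not prove this statement at all --- it records the Euler--Lagrange equation as a classical background theorem and immediately applies it to the Randers length functional --- so there is no in-paper proof to compare against; your write-up simply supplies the textbook justification the paper takes for granted. Your two flagged technical points are well taken: the fundamental lemma does need the continuity of $\frac{\partial L}{\partial x}-\frac{d}{dt}\frac{\partial L}{\partial x'}$, and forming $\frac{d}{dt}\frac{\partial L}{\partial x'}$ presupposes regularity of the minimiser that is usually secured by a du Bois-Reymond argument; neither issue causes trouble in the paper's application, where $L_{}(t,\gamma,\gamma')=\lVert\gamma'\rVert_2+\omega^\top\gamma'$ is smooth away from $\gamma'=0$. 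One cosmetic remark: the paper's statement writes $\frac{\partial\mathcal{L}}{\partial x}$ where it means $\frac{\partial L}{\partial x}$ (the Lagrangian, not the functional); your derivation uses the correct object.
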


Many generalisations of the Euler-Lagrange equations exist. In our case, when $x(t) = (x_1(t),\cdots, x_m(t))^\top\in\mathbb{R}^m$ is multi-dimensional, the Euler-Lagrange equation is duplicated for each output dimension. In other words, the minimum solution satisfies the set of equations
\begin{equation}
    \frac{\partial \mathcal{L}}{\partial x_i} - \frac{d}{dt} \frac{\partial \mathcal{L}}{\partial x_i'} = 0 \quad\forall i\in\{1,\cdots, m\}.
\end{equation}

The Euler-Lagrange equations can be used to derive shortest geodesic paths in our canonical Randers space. 
The length is a functional (\cref{eq:length}), that can be rewritten from a Lagrangian perspective as 
\begin{equation}
    \mathcal{L}_{F^C}(\gamma) = \int_0^1 L\big(t,\gamma(t),\gamma'(t)\big)dt,    
\end{equation}
where
\begin{equation}
    L\big(t,\gamma(t),\gamma'(t)\big) = F_{\gamma(t)}^C\big(\gamma'(t)\big) = \lVert \gamma'(t)\rVert_2 + \omega^\top \gamma'(t).
\end{equation}

Denoting $\gamma = (\gamma_1,\cdots,\gamma_m)$ and $\gamma' = (\gamma_1',\cdots,\gamma_m')$, the Euler-Lagrange equations for this functional are given for all $i\in\{1,\cdots,m\}$ by
\begin{equation}
    \frac{\partial L}{\partial \gamma_i} - \frac{d}{dt} \frac{\partial L}{\partial \gamma_i'} = 0.
\end{equation}

Since $L$ does not explicitly depend on $\gamma$, but only its derivative, we have that the Euler-Lagrange equations simplify to
\begin{align}
    0 &= \frac{d}{dt}\frac{\partial \mathcal{L}}{\partial \gamma_i'} \\
    &= \frac{d}{dt}\left(\frac{\gamma_i'(t)}{\lVert \gamma'(t)\rVert_2} + \omega_i\right).
\end{align}

In the canonical space, $\omega$ is a uniform vector field, as such its coordinates $\omega_i$ do not depend on $t$. Thus, $\tfrac{d}{dt}\omega_i = 0$. We then have, stacking the Euler-Lagrange equations into vector form, that
\begin{equation}
    \label{eq: d/dt gamma'/norm(gamma') = 0}
    \frac{d}{dt}\left(\frac{\gamma'(t)}{\lVert \gamma'(t)\rVert_2}\right) = 0.
\end{equation}

\Cref{eq: d/dt gamma'/norm(gamma') = 0} is the same as the one we would obtain if $\omega\equiv 0$, i.e. if the metric was Riemannian. It is well-known to describe the equation of a straight line. To see this, if we take $t=s$ to be the Euclidean arclength parametrisation, then $\lVert \gamma'(s)\rVert_2=1$ and then the Euler-Lagrange equation becomes $\frac{d}{ds}\gamma'(s)=0$, meaning that $\gamma'(s)$ is constant and thus $\gamma(s)$ is a straight Euclidean line. Shortest paths in the canonical Randers space are thus the straight segments as in the Euclidean space, making it a flat space.

\hfill \break \noindent \textbf{Calculation.} To better understand the particular structure of the canonical Randers space, we provide an alternative simple proof. Assume without loss of generality that $\omega = \alpha (0,\cdots, 0, 1)^\top$, and denote $\gamma(t) = (x_1(t),\cdots,x_m(t))^\top$. Then 
\begin{align}
     \mathcal{L}_{F^C}(\gamma) &= \int_0^1 F_{\gamma(t)}(\gamma'(t))dt\cr 
        &= \int_0^1 \left(\lVert \gamma'(t)\rVert_2 + \omega^\top \gamma'(t)\right) dt\cr 
        &= \int_0^1 \lVert \gamma'(t)\rVert_2 dt + \alpha \int_0^1 x_m' dt\cr
        &= \int_0^1 \lVert \gamma'(t)\rVert_2 dt + \alpha \int_{x_0}^{x_1} dx\cr
        &= \int_0^1 \lVert \gamma'(t)\rVert_2 dt + \alpha (x_1-x_0).
\end{align}
The right term is a constant not depending on the curve $\gamma$, whereas the left term is the usual functional giving the Euclidean length of the curve $\gamma$. Thus, the shortest path in the canonical Randers space is also the shortest path in the Euclidean space, which is given by the Euclidean segment $\gamma(t) = (1-t)x + t y$.

\subsection{Proof of \cref{prop:randers_distance}}
\label{sec: proof of randers canonical distance}

\hfill \break \noindent Although the shortest paths are the same in the canonical Randers space and in the Euclidean space, i.e.\ $\gamma_{x\to y}^{F^C}(t) = (1-t)x + ty$, their lengths are not the same as they depend on the direction of traversal. Since the metric is canonical, it does not depend on the position $\gamma_{x\to y}^{F^C}(t)$. Noticing that $\left(\gamma_{x\to y}^{F^C}\right)'(t) = y-x$, a direct calculation gives
\begin{align}
    d_{F^C}(x,y) &= \mathcal{L}_{F^C}\left(\gamma_{x\to y}^{F^C}\right)\nonumber\\
    &=\int_0^1 F^C_{\gamma_{x\to y}^{F^C}(t)}\bigg(\left(\gamma_{x\to y}^{F^C}\right)'(t)\bigg)dt\nonumber\\ 
    &= \int_0^1 \bigg(\left\lVert \left(\gamma_{x\to y}^{F^C}\right)'(t)\right\rVert_2 + \omega^\top \left(\gamma_{x\to y}^{F^C}\right)'(t)\bigg) dt\nonumber\\
    &= \int_0^1 \big(\lVert y-x\rVert_2 + \omega^\top (y-x)\big)dt\nonumber\\
    &=\lVert y-x\rVert_2 + \omega^\top (y-x). \qed
\end{align}

\subsection{Proof of \cref{th: canonical randers generalises riemann with extra dim}}
\label{sec: proof of canonical randers generalises riemann with extra dim}

By assumption, the data can be accurately embedded in the Euclidean space $\mathbb{R}^m$. Denote $\boldsymbol{X}\in\mathbb{R}^m$ this solution, with 
$d(x_i, x_j) = \boldsymbol{D}_{i,j}$ for all pairs $(i,j)$.
Consider now the Finsler MDS problem into the canonical Randers space of dimension $\mathbb{R}^{m+1}$.
Without loss of generality, we can assume that $\omega$ is along the last coordinate axis.
The embedding $\boldsymbol{Y} = [\boldsymbol{X}, 0]\in\mathbb{R}^{N\times (m+1)}$, which is the concatenation of the $m$-dimensional Euclidean embedding with a last $0$ coordinate is the minimal solution. Indeed, since the embedding  lies in a hyperplane orthogonal to $\omega$, we have
$d_F(x_i, x_j) = d_E(x_i, x_j)$ for all pairs $(i,j)$.
Since the Euclidean embedding is accurate, we have 
$d_{F^C}(x_i,x_j) = \boldsymbol{D}_{i,j}$ for all pairs $(i,j)$.
\qed

\subsection{Derivation of \cref{eq:sigma2 traces}}
\label{sec:reformulation of fmds}

Plugging into the Finsler stress (\cref{eq:fmds}) the canonical Randers distances between embedded points (\cref{eq:randers distance}), we have
\begin{align}
    \sigma^{2}(\boldsymbol{X}) &= \sum_{i,j} w_{ij} \lVert x_{j} - x_{i} \rVert_2^{2} \nonumber\\
    &\hspace{2em}+ 2 \sum_{i,j} w_{ij}\lVert x_{j} - x_{i} \rVert_2\omega^{\top} (x_{j}-x_{i})^{\top} \nonumber\\
    &\hspace{2em}+ \sum_{i,j} w_{ij}(x_{j}-x_{i})\omega \omega^{\top} (x_{j}-x_{i})^{\top} \nonumber\\
    &\hspace{2em}- 2 \sum_{i,j}w_{ij}\boldsymbol{D}_{ij}\lVert x_j - x_i \rVert_2 \nonumber\\
    &\hspace{2em}- 2 \sum_{i,j} w_{ij}\boldsymbol{D}_{ij}\omega^{\top} (x_j - x_i)^{\top} + \sum_{i,j} w_{ij}\boldsymbol{D}_{ij}^{2}.
\end{align}
As $w_{ij} = w_{ji}$, the second summation term vanishes
\begin{align}
       \sigma^{2}(\boldsymbol{X}) &= \sum_{i,j} w_{ij} \lVert x_{j} - x_{i} \rVert_2^{2} \nonumber\\
       &\hspace{2em}+ \sum_{i,j} w_{ij}(x_{j}-x_{i})\omega \omega^{\top} (x_{j}-x_{i})^{\top} \nonumber\\
       &\hspace{2em}- 2 \sum_{i,j}w_{ij}\boldsymbol{D}_{ij}\lVert x_j - x_i \rVert_2 \nonumber\\
       &\hspace{2em}- 2 \sum_{i,j} w_{ij}\boldsymbol{D}_{ij}\omega^{\top} (x_j - x_i)^{\top} + \sum_{i,j} w_{ij}\boldsymbol{D}_{ij}^{2}. 
       \label{eq: stress W sym sum ij}
\end{align}

The terms $\sum_{i,j} w_{ij} \lVert x_{j} - x_{i} \rVert_2^{2}$ and $\sum_{i,j}w_{ij}\boldsymbol{D_{ij}}\lVert x_j - x_i \rVert_2$ are the ones we would obtain in the traditional SMACOF algorithm \cite{groenen2016multidimensional}, and can be written, respectively, $\tr(\boldsymbol{X}^{\top}V\boldsymbol{X})$ and $\tr(\boldsymbol{X}^{\top}B(\boldsymbol{X})\boldsymbol{X})$, with $V$ and $B$ given by \cref{eq:V} and \cref{eq:B}.

The terms $\sum_{i,j} w_{ij}(x_{j}-x_{i})\omega \omega^{\top} (x_{j}-x_{i})^{\top}$ and $\sum_{i,j} w_{ij}\boldsymbol{D}_{ij}\omega^{\top} (x_j - x_i)^{\top}$ are specific to the Randers metric, and can be simply written as $\tr(\boldsymbol{X}^{\top}V\boldsymbol{X}\omega \omega^{\top})$ and 
$\tr((W^{\top}\odot \boldsymbol{D}^{\top} -W \odot \boldsymbol{D})\mathbbm{1}_{m}\omega^{\top} \boldsymbol{X}^{\top})$.

\subsection{Proof of \cref{prop:FSMACOF}}\label{sec:proof of fsmacof}

Our proof is based on the \emph{majorisation} approach \cite{leeuw1977application,groenen2016multidimensional}. Inspired by the traditional SMACOF algorithm, we aim to find a function $g(\cdot,\cdot)$ that satisfies all the following conditions for any points $\boldsymbol{X}$ and $\boldsymbol{Y}$:
\begin{enumerate}[label=(\roman*)]
    \item $\sigma^{2}(\boldsymbol{X}) = g(\boldsymbol{X},\boldsymbol{X})$,
    \item $\sigma^{2}(\boldsymbol{X}) \leq g(\boldsymbol{X},\boldsymbol{Y})$ for any $\boldsymbol{Y}$,
    \item $g(\boldsymbol{X},\boldsymbol{Y})$ 
    can be easily 
    minimised
    with respect to $\boldsymbol{X}$ for any $\boldsymbol{Y}$.
\end{enumerate}
For such a function $g$, the algorithm 
\begin{equation}
    \label{eq: majorisation update rule}
    \boldsymbol{X}^{(k+1)} = 
    \argmin
    \limits_{\boldsymbol{X}} g(\boldsymbol{X}, \boldsymbol{X}^{(k)})
\end{equation}
decreases the stress at each iteration as
\begin{equation}
    g(\boldsymbol{X}^{(k)},\boldsymbol{X}^{(k)}) \ge g(\boldsymbol{X}^{(k+1)}, \boldsymbol{X}^{(k)}) \ge g(\boldsymbol{X}^{(k+1)}, \boldsymbol{X}^{(k+1)}).
\end{equation}
 Since the stress $\sigma^2(\boldsymbol{X}^{(k)}) = g(\boldsymbol{X}^{(k)}, \boldsymbol{X}^{(k)})$ decreases at each iteration, the algorithm converges to a local minimum (sandwich theorem).

We now look for a suitable function $g$. From the derivation of the stress function in \cref{eq: stress W sym sum ij}, we have
\begin{align}
    \sigma^{2}(\boldsymbol{X}) &= \tr(\boldsymbol{X}^{\top}V\boldsymbol{X}) + \tr(\boldsymbol{X}^{\top}V\boldsymbol{X}\omega \omega^{\top})  \nonumber\\ 
    &\hspace{2em}+ 2 \tr(C\boldsymbol{X}^{\top}) - 2 \tr(\boldsymbol{X}^{\top}B(\boldsymbol{X})\boldsymbol{X}) ,
\end{align}
As in the traditional SMACOF \cite{borg2007modern}, the Cauchy-Schwarz inequality implies that
\begin{align}
    \tr(\boldsymbol{X}^{\top}B(\boldsymbol{X})\boldsymbol{X}) \geq \tr(\boldsymbol{X}^{\top}B(\boldsymbol{Y})\boldsymbol{Y}) = \mu(\boldsymbol{X},\boldsymbol{Y}).
\end{align}
We thus have
\begin{equation}
    \sigma^{2}(\boldsymbol{X}) \leq g(\boldsymbol{X},\boldsymbol{Y}) ,
\end{equation}
where
\begin{align}\label{eq:g}
    g(\boldsymbol{X},\boldsymbol{Y}) &= \tr(\boldsymbol{X}^{\top}V\boldsymbol{X}) + \tr(\boldsymbol{X}^{\top}V\boldsymbol{X}\omega \omega^{\top})  \nonumber\\ 
    &\hspace{2em}+ 2 \tr(C \boldsymbol{X}^{\top}) - 2 \mu(\boldsymbol{X},\boldsymbol{Y}) .
\end{align}

To implement the majorisation update rule (\cref{eq: majorisation update rule}), we need to compute the gradient
of $g(\cdot,\boldsymbol{X}^{(k)})$ (\cref{eq:g}).
We have
\begin{align}
\nabla_{\boldsymbol{X}}g(\boldsymbol{X}^{(k+1)},\boldsymbol{X}^{(k)}) &= 2 V \boldsymbol{X}^{(k+1)} +2 V \boldsymbol{X}^{(k+1)}\omega \omega^{\top} \nonumber\\
&\hspace{2em}+ 2C - 2 B(\boldsymbol{X}^{(k)})\boldsymbol{X}^{(k)} .
\end{align}
Since $\boldsymbol{X}^{(k+1)}$ 
minimises
$g$ (\cref{eq: majorisation update rule}), the first-order optimality conditions
lead to
\begin{equation}
2 V \boldsymbol{X}^{(k+1)} + 2 V \boldsymbol{X}^{(k+1)}\omega \omega^{\top} + 2C = 2 B(\boldsymbol{X}^{(k)})\boldsymbol{X}^{(k)}.
\label{eq: grad = 0 k+1 k before pseudo inverse}
\end{equation}

The terms in $\boldsymbol{X}^{(k+1)}$ are linear, we can thus pseudo-invert this system of equations to get the update rule. Recall how to rewrite a linear system of equations to only have the unknowns to the right of the coefficient matrix.

\begin{lemma}
    For any matrices $A$, $X$, and $C$, we have
    $$ A X B = C \iff (B^{\top} \otimes A)\vect(X) = \vect(C).$$
\end{lemma}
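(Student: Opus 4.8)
The plan is to reduce the biconditional to the single standard ``vec-trick'' identity $\vect(AXB) = (B^{\top}\otimes A)\,\vect(X)$, after which the equivalence is immediate because vectorisation is a linear bijection between matrices and stacked column vectors. Concretely, $AXB = C$ holds if and only if the two matrices have equal vectorisations, $\vect(AXB) = \vect(C)$; substituting the identity on the left-hand side then turns this into $(B^{\top}\otimes A)\vect(X) = \vect(C)$, which is exactly the claim. Thus the only real content is establishing the identity itself, and the forward and backward implications come for free once it is in hand.

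To prove the identity, I would work column by column, exploiting the column-stacking convention of $\vect$ fixed earlier in the paper. Writing $X_{\cdot j}$ for the $j$-th column of $X$ and $B_{jk}$ for the entries of $B$, the $k$-th column of $XB$ is $\sum_j B_{jk} X_{\cdot j}$, so the $k$-th column of $AXB$ is $\sum_j B_{jk}\,(A X_{\cdot j})$. Stacking over $k$ shows that the $k$-th block of $\vect(AXB)$ equals $\sum_j B_{jk}\,A\,X_{\cdot j}$.

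On the other side, the $(k,j)$ block of $B^{\top}\otimes A$ is $(B^{\top})_{kj}A = B_{jk}A$, and the $j$-th block of $\vect(X)$ is precisely $X_{\cdot j}$. Hence the $k$-th block of $(B^{\top}\otimes A)\vect(X)$ is again $\sum_j B_{jk}\,A\,X_{\cdot j}$, matching the expression just computed for $\vect(AXB)$. Since all blocks agree, the identity holds, and the lemma follows by the bijectivity argument above.

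The main obstacle here is bookkeeping rather than conceptual: one must align the column-stacking order of $\vect$ with the block structure of the Kronecker product, and in particular verify that the transpose in $B^{\top}$ is exactly what moves the summation index into the position needed for the blocks to coincide. Once the block indexing is pinned down, the computation is a direct term-by-term match, with no inequalities, limiting arguments, or appeals to the specific Randers structure required.
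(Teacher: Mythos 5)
Your proof is correct: the column-by-column computation of $\vect(AXB)$ and the block structure of $B^{\top}\otimes A$ match exactly as you describe, and the biconditional does follow from the injectivity of $\vect$. Note that the paper states this lemma without proof, treating it as the classical ``vec-trick'' identity, so there is no in-paper argument to compare against; your derivation is the standard one and fills that gap cleanly. The only cosmetic point is that dimensional compatibility of $A$, $X$, $B$, $C$ should be assumed implicitly (the lemma's statement even omits $B$ from its quantifier list), but this does not affect the argument.
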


Applying this rewrite to the linear system of equations \cref{eq: grad = 0 k+1 k before pseudo inverse}, we get the desired update rule
\begin{equation}
    \vect(\boldsymbol{X}^{(k+1)}) = \\ K^{\dagger} \vect(B(\boldsymbol{X}^{(k)})\boldsymbol{X}^{(k)} - C) ,
\end{equation}
where $K =  (I_m + \omega\omega^\top) \otimes V$ is a Kronecker matrix.

\section{The Relationship between Current Fields and Randers Metrics}
\label{sec: current vs randers}

The search of shortest-time trajectories in a medium with time-independent wind is an old problem first studied by Ernst Zermelo \cite{zermelo1931navigationsproblem} and is called the \emph{Zermelo navigation problem}. In fact, it has turned out to be such an important question that it can be used to explain causality in space-time \cite{caponio2014wind}.
In the presence of wind $v(x)$, unit balls of the Finsler metric $F_x$ are offset by $v(x)$. To remain in a Finsler space, where $0$ is inside unit balls, the wind must have a small magnitude $F_x(-v(x)) < 1$. Note that in the presence of large winds, the wind implies irreversible displacements, explaining the irreversibility of time and causality in the world. However, the obtained metric in large winds is no longer a Finsler metric.

Consider the traditional case of a Riemannian manifold $\mathcal{X}$. For notational simplicity, we will drop the explicit dependence on $x$. The Riemannian metric is written as $R(u) = \lVert u\rVert_M$. Consider a wind with small magnitude $\lVert v\rVert_{M^{-1}} < 1$. The Zermelo metric $F$, which provides the Finsler metric measuring the traversal time of agents along curves on $\mathcal{X}$ with wind $v$ is given by the equation \cite{shen2003finsler}
\begin{equation}
    R\left(\frac{u}{F(u)} - v\right) = 1.
\end{equation}
Solving this equation with respect to $F(u)$ yields the Zermelo metric given by
\begin{equation}
    F(u) = \lVert u\rVert_{M_v} + \omega_v^\top u,
\end{equation}
where
\begin{align}
    M_v &= \frac{1}{(1-\lVert u \rVert_{M}^2)^2}
    \Big(Mvv^\top M + (1-\lVert v\rVert_{M}^2)M\Big), \\
    \omega_v &= - \frac{1}{1-\lVert v\rVert_{M}^2} M v.
\end{align}
The Zermelo metric is thus a Randers Finsler metric.
In particular, note that for the traditional isotropic Riemannian metric with $M = I$, and a small current $\lVert v\rVert_2^2 \ll 1$, then $M_v\approx M$ and the Randers drift component becomes $\omega_v \approx -v$.
As we work on synthetic current data with $M = I$, we make the simplifying approximation when computing the Zermelo-Randers metric that it is given by $F(u) = \lVert u\rVert_2 - v^\top u$. Thus our Randers linear drift component is given by the opposite of the current field.

\section{Wormhole Finsler MDS}
\label{sec: wormhole Finsler MDS}

Our Finsler MDS formulation allows to use non-uniform weights $w_{i,j}$ in the Finsler stress function, similar to regular MDS approaches. Here, we focus on generalising the recent state-of-the-art method WHCIE \cite{bracha2024wormhole} 
for computing theoretically guaranteed consistent pairs of points on manifolds sampled with missing parts. It was originally motivated for improving unsupervised shape matching to handle partial shapes by filtering out inconsistent pairs from the Gromov-Wassertein loss \cite{bracha2024unsupervised}. 
We first present the existing approaches in Riemannian manifolds and then focus on our generalisation to Finsler manifolds.

\hfill \break
\noindent \textbf{Riemannian wormhole criterion.}
Let $\tilde{\mathcal{X}}$ be a Riemannian data manifold (without missing parts) and $\tilde{\mathcal{Y}}\subset\tilde{\mathcal{X}}$ be a version of the data manifold that is missing some parts $\tilde{\mathcal{Y}}\neq \tilde{\mathcal{X}}$.
Let $\tilde{\boldsymbol{X}}$ be sampled data on $\tilde{\mathcal{Y}}$ (and thus also on $\tilde{\mathcal{X}}$).
Data dissimilarities are computed as shortest path distances. 
However, depending on whether we are given the full manifold $\tilde{\mathcal{X}}$ or the partial one $\tilde{\mathcal{Y}}$, the computed data dissimilarities $D_{\tilde{\mathcal{X}}}$ and $D_{\tilde{\mathcal{Y}}}$, computed respectively on $\tilde{\mathcal{X}}$ and $\tilde{\mathcal{Y}}$, might differ. This is due to the fact that geodesic trajectories in the full manifold $\tilde{\mathcal{X}}$ might pass through missing parts of $\tilde{\mathcal{Y}}$ making shortest paths on $\tilde{\mathcal{Y}}$ longer for some pairs of points.
As the data dissimilarities differ, optimising the stress function with each of them and using the same uniform weight scheme $w_{i,j} = 1$ for all pairs will lead to different embeddings $\boldsymbol{X}\neq\boldsymbol{Y}$.
The objective here is to design a different strategy on the weights $w_{i,j}$ such that the resulting embeddings are as close as possible $\boldsymbol{X}\approx \boldsymbol{Y}$, meaning that the scheme is robust to missing parts.

Pairs of points $x_i$ and $x_j$ are said to be consistent if their shortest path distances $(D_{\tilde{\mathcal{X}}})_{i,j} = (D_{\tilde{\mathcal{Y}}})_{i,j}$ are consistent on the full $\tilde{\mathcal{X}}$ and partial $\tilde{\mathcal{Y}}$ shapes. In practice, a majority of pairs is consistent \cite{bracha2024wormhole}, but a significant amount of pairs are inconsistent, leading to incorrect geodesic dissimilarity estimates in the partial case affecting the embedding. To mitigate this effect, a natural approach is to filter out inconsistent pairs my masking out their contribution to the stress function. This translates to choosing a weight scheme $w_{i,j} \in\{0,1\}$, with $w_{i,j} = 1$ only for consistent pairs. One way of proceeding is to use heuristics for short distance computations and focus only on local pairs \cite{schwartz2019intrinsic}. More recently, another paradigm has shown impressive results \cite{rosman2010nonlinear,bracha2024wormhole}. Rather than focusing on local pairs, the idea is to design a criterion that can guarantee whether a pair is consistent. Guaranteeing means that there is theoretically no false alarm possible by the criterion: only consistent pairs are found. More general criteria find more consistent pairs, allowing the method to use more non-perturbed information to find the embedding.

A common misconception is to believe that shortest paths not intersecting the boundary $\tilde{\mathcal{B}} = \delta \tilde{\mathcal{Y}}$ provide consistent pairs, as was debunked in \cite{bracha2024wormhole}. Rather than focusing on the intersection with the boundary of the partial manifold, the distances to the boundary were used to define the criteria.
Let $\tilde{x}_{i_{\tilde{\mathcal{B}}}}$ and $\tilde{x}_{j_{\tilde{\mathcal{B}}}}$ be the closest boundary points to $\tilde{x}_i$ and $\tilde{x}_j$ on the partial shape $\tilde{\mathcal{Y}}$,
\begin{equation}
    \tilde{x}_{i_{\tilde{\mathcal{B}}}} = \argmin\limits_{\tilde{x}_b\in \tilde{\mathcal{B}}} (\boldsymbol{D}_{\tilde{\mathcal{Y}}})_{i,b}
    \quad \text{and} \quad
    \tilde{x}_{j_{\tilde{\mathcal{B}}}} = \argmin\limits_{\tilde{x}_b\in \tilde{\mathcal{B}}} (\boldsymbol{D}_{\tilde{\mathcal{Y}}})_{j,b}.
\end{equation}
In \cite{bronstein2006efficient,rosman2010nonlinear}, 
a first criterion $\mathcal{C}_\mathcal{T}:\tilde{\mathcal{Y}}\times\tilde{\mathcal{Y}}\to \{0,1\}$ was proposed
\begin{equation}
    \mathcal{C}_\mathcal{T}(\tilde{x}_i, \tilde{x}_j) = 
    \mathbbm{1}_{
    (\boldsymbol{D}_{\tilde{\mathcal{Y}}})_{i,j} \le (\boldsymbol{D}_{\tilde{\mathcal{Y}}})_{i, i_{\tilde{\mathcal{B}}}} + (\boldsymbol{D}_{\tilde{\mathcal{Y}}})_{j, j_{\tilde{\mathcal{B}}}}
    },
\end{equation}
where $\mathbbm{1}$ is the indicator function. The idea behind this criterion is that if geodesic paths on the full manifold $\tilde{\mathcal{X}}$ between points $\tilde{x}_i$ and $\tilde{x}_j$ should pass through missing parts in $\tilde{\mathcal{Y}}$, then their length is at least the sum of the distances to the boundary $\tilde{\mathcal{B}}$.
However, this intrinsic criterion is particularly conservative as it discards the length of this trajectory between boundary points, since information on the manifold is lost inside missing parts. Recently, \cite{bracha2024wormhole} lifted extrinsic information to provide a worst case bound on the length of paths between boundary points. If the Riemannian metric on the manifold is the standard one given by the identity matrix, then trajectories between boundary points on the manifold are at least longer than the length of the straight segment in the original Euclidean embedding space $\mathbb{R}^n$
\begin{equation}
    \label{eq: Euclidean straight line shortest}
    (\boldsymbol{D}_{\tilde{\mathcal{X}}})_{b_1, b_2} \ge d_E(\tilde{x}_{b_1}, \tilde{x}_{b_2})
\end{equation}
for any boundary points $\tilde{x}_{b_1}$ and $\tilde{x}_{b_2}$. From this simple observation, \cite{bracha2024wormhole} generalised the $\mathcal{C}_{\mathcal{T}}$ criterion to the \emph{wormhole criterion} $\mathcal{C}_{\mathcal{W}}:\tilde{\mathcal{Y}}\times\tilde{\mathcal{Y}}\to\{0,1\}$ defined as
\begin{equation}
    \mathcal{C}_{\mathcal{W}}(\tilde{x}_i,\tilde{x}_j) = \mathbbm{1}_{(\boldsymbol{D}_{\tilde{\mathcal{Y}}})_{i,j} \le \boldsymbol{K}_{i,j}^E
    },
\end{equation}
where the threshold matrix $\boldsymbol{K}_{i,j}^E$ is computed as
\begin{equation}
    \boldsymbol{K}_{i,j}^E = \min\limits_{\tilde{x}_{b_1},\tilde{x}_{b_2}\in B} 
    (\boldsymbol{D}_{\tilde{\mathcal{Y}}})_{i, b_1} + (\boldsymbol{D}_{\tilde{\mathcal{Y}}})_{j, b_2} + d_E(\tilde{x}_{b_1}, \tilde{x}_{b_2}).
\end{equation}
For more general Riemannian metrics on the manifold, \cite{bracha2024wormhole} showed how to generalise the wormhole criterion. The idea is to provide a worst case bound on the distance of each infinitesimally small Euclidean arclength step along the straight Euclidean segment between boundary points. 
Denote $\lambda_{\tilde{M}} >0$ to be the minimum eigenvalue of the Riemannian metric $\tilde{M}$ over the full manifold $\tilde{\mathcal{X}}$, and can be assumed to be given. By bounding the Riemannian length of Euclidean arclength steps along curves, \cref{eq: Euclidean straight line shortest} becomes
\begin{equation}
    \label{eq: Riemann and Euclidean straight line shortest}
    (\boldsymbol{D}_{\tilde{\mathcal{X}}})_{b_1,b_2} \ge \sqrt{\lambda_{\tilde{M}}}d_E(\tilde{x}_{b_1}, \tilde{x}_{b_2})
\end{equation}
for any boundary points $\tilde{x}_{b_1}, \tilde{x}_{b_2}\in \tilde{\mathcal{B}}$. The wormhole criterion then becomes
\begin{equation}
    \mathcal{C}_{\mathcal{W}}(\tilde{x}_i,\tilde{x}_j) = \mathbbm{1}_{(\boldsymbol{D}_{\tilde{\mathcal{Y}}})_{i,j} \le \boldsymbol{K}_{i,j}^R
    }
\end{equation}
where the generalised Riemannian threshold matrix $\boldsymbol{K}^R$ is now
\begin{equation}
    \boldsymbol{K}_{i,j}^R = \min\limits_{\tilde{x}_{b_1},\tilde{x}_{b_2}\in B} 
    (\boldsymbol{D}_{\tilde{\mathcal{Y}}})_{i, b_1} + (\boldsymbol{D}_{\tilde{\mathcal{Y}}})_{j, b_2} + \sqrt{\lambda_{\tilde{M}}}d_E(\tilde{x}_{b_1}, \tilde{x}_{b_2}).
\end{equation}
The criteria $\mathcal{C}_\mathcal{T}(x_i, x_j)$ and $\mathcal{C}_\mathcal{W}(x_i, x_j)$ are chosen to be the weights $w_{i,j}$ for the TCIE \cite{rosman2010nonlinear} and WHCIE \cite{bracha2024wormhole} methods respectively. In particular, WHCIE demonstrates impressive robustness
and forms the current state-of-the-art in finding consistent pairs on Riemannian manifolds.

The core idea behind the wormhole criterion is in \cref{eq: Euclidean straight line shortest,eq: Riemann and Euclidean straight line shortest}, that find how to lower bound the manifold's metric length of Euclidean arclength infinitesimal steps. We propose to take this idea and apply it to Finsler manifolds.

\hfill \break
\noindent \textbf{Finsler wormhole criterion.}
Assume now that the data manifold $\tilde{\mathcal{X}}$ is equipped with a Finsler metric $\tilde{F}$ and that there exists $C_{\tilde{F}} > 0$ such that the Finsler length of infinitesimal Euclidean arclength steps $d\tilde{s}$ is bounded by $\tilde{F}_{\tilde{x}}(d\tilde{s})  \ge C_{\tilde{F}} \lVert d\tilde{s} \rVert_2$. Then the Finsler length of curves between boundary points can be lower bounded using the Euclidean embedding distance.

\begin{proposition}
    \label{prop: Finsler and Euclidean straight line shortest}
    The Finsler distance on the Finsler manifold $\tilde{\mathcal{X}}$ between any points $x_i$ and $x_j$ is lower bounded by
    $$ (\boldsymbol{D}_{\tilde{\mathcal{X}}})_{i, j} \ge C_{\tilde{F}} d_E(\tilde{x}_i, \tilde{x}_j)
    $$
\end{proposition}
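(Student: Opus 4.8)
The plan is to show that the hypothesised pointwise lower bound on the Finsler metric propagates to a lower bound on lengths, and hence on the geodesic distance, by comparing against Euclidean arclength in the ambient space. Concretely, I would bound the Finsler length of \emph{every} curve joining $\tilde{x}_i$ to $\tilde{x}_j$ from below by $C_{\tilde{F}}$ times its Euclidean length, and then use the elementary fact that the straight Euclidean segment is the shortest curve in $\mathbb{R}^n$. Since the Finsler distance $(\boldsymbol{D}_{\tilde{\mathcal{X}}})_{i,j}$ is the length of a length-minimising geodesic (\cref{eq:distance_geodesic}), the same bound transfers directly to the distance.

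For the steps, let $\gamma$ denote any smooth curve with $\gamma(0)=\tilde{x}_i$ and $\gamma(1)=\tilde{x}_j$. Starting from the length functional (\cref{eq:length}) and applying the standing hypothesis $\tilde{F}_{\tilde{x}}(u)\ge C_{\tilde{F}}\norm{u}_2$ pointwise to the tangent vector $u=\gamma'(t)$ gives
\begin{equation*}
\mathcal{L}_{\tilde{F}}(\gamma) = \int_0^1 \tilde{F}_{\gamma(t)}\big(\gamma'(t)\big)\,dt \;\ge\; C_{\tilde{F}}\int_0^1 \norm{\gamma'(t)}_2\,dt \;=\; C_{\tilde{F}}\,\mathcal{L}_E(\gamma),
\end{equation*}
where the remaining integral is exactly the Euclidean arclength $\mathcal{L}_E(\gamma)$ of $\gamma$ in $\mathbb{R}^n$. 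I would then invoke that the Euclidean length of any curve joining the two points dominates the straight-segment distance, $\mathcal{L}_E(\gamma)\ge d_E(\tilde{x}_i,\tilde{x}_j)$, so that $\mathcal{L}_{\tilde{F}}(\gamma)\ge C_{\tilde{F}}\,d_E(\tilde{x}_i,\tilde{x}_j)$ for \emph{every} admissible $\gamma$. Taking the infimum over all such curves (equivalently, evaluating at the minimising geodesic $\gamma^{\tilde{F}}_{\tilde{x}_i\to\tilde{x}_j}$) yields $(\boldsymbol{D}_{\tilde{\mathcal{X}}})_{i,j}\ge C_{\tilde{F}}\,d_E(\tilde{x}_i,\tilde{x}_j)$, as claimed.

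This is a direct Finsler analogue of the Riemannian bound in \cref{eq: Riemann and Euclidean straight line shortest}, with $C_{\tilde{F}}$ playing the role of $\sqrt{\lambda_{\tilde{M}}}$, so I do not expect any genuinely hard step. The only points worth stating carefully are, first, that positive homogeneity of the Finsler metric (property \ref{it: finsler metric positive homogeneous}) makes the length integral parametrisation-independent, so the bound does not depend on how the geodesic is traced; and second, that the Euclidean comparison is extrinsic, made in the ambient space $\mathbb{R}^n$ in which $\tilde{\mathcal{X}}$ is embedded, so that $d_E(\tilde{x}_i,\tilde{x}_j)$ is the straight-line distance. Because the minimising Finsler geodesic is itself a curve in $\mathbb{R}^n$, the inequality $\mathcal{L}_E(\gamma)\ge d_E(\tilde{x}_i,\tilde{x}_j)$ applies to it without further hypotheses, which is what makes the chain of inequalities close cleanly.
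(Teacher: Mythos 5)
Your proposal is correct and follows essentially the same route as the paper's proof: integrate the pointwise bound $\tilde{F}_{\tilde{x}}(u)\ge C_{\tilde{F}}\lVert u\rVert_2$ along any admissible curve to dominate its Euclidean arclength, then use that the straight segment minimises Euclidean length, and pass to the infimum. You simply spell out the steps (and the parametrisation-independence remark) more explicitly than the paper does.
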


\begin{proof}
    The proof is an immediate generalisation of the arguments in the Riemannian case. By integrating the lower bound on Euclidean arclength steps $\tilde{F}_{\tilde{x}}(d\tilde{s})  \ge C_{\tilde{F}} \lVert d\tilde{s} \rVert_2$, and since the euclidean length of any curve between $x_i$ and $x_j$ is at least that of the Euclidean straight segment between them, we get the desired lower bound.
\end{proof}

Denote $\boldsymbol{K}^F$ the generalised Finsler threshold matrix
\begin{equation}
    \boldsymbol{K}_{i,j}^F =  \min\limits_{\tilde{x}_{b_1},\tilde{x}_{b_2}\in B} 
    (\boldsymbol{D}_{\tilde{\mathcal{Y}}})_{i, b_1} + (\boldsymbol{D}_{\tilde{\mathcal{Y}}})_{b_2, j} + C_{\tilde{F}}d_E(\tilde{x}_{b_1}, \tilde{x}_{b_2}).
\end{equation}
We can then define the Finsler wormhole criterion $\mathcal{C}_{{\mathcal{W}_{\mathcal{F}}}}$.

\begin{definition}[Finsler wormhole criterion]
    The Finsler wormhole criterion $\mathcal{C}_{{\mathcal{W}_{\mathcal{F}}}}$ is defined as
    $$\mathcal{C}_{{\mathcal{W}_{\mathcal{F}}}}(\tilde{x}_i,\tilde{x}_j) = \mathbbm{1}_{(\boldsymbol{D}_{\tilde{\mathcal{Y}}})_{i,j} \le \boldsymbol{K}_{i,j}^F
    }.
    $$
\end{definition}

By construction, the Finsler wormhole criterion only finds consistent pairs.

\begin{theorem}[$\mathcal{C}_{{\mathcal{W}_{\mathcal{F}}}}$ guarantees consistent pairs]
    The Finsler wormhole criterion guarantees found pairs to be consistent.
\end{theorem}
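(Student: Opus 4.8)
The plan is to establish the claim by contraposition: I would show that every \emph{inconsistent} pair fails the test $\mathcal{C}_{\mathcal{W}_{\mathcal{F}}}$, which is logically equivalent to the assertion that every selected pair is consistent. The starting point is a monotonicity observation. Since $\tilde{\mathcal{Y}} \subseteq \tilde{\mathcal{X}}$, every admissible curve on the partial manifold is also admissible on the full one, so enlarging the domain can only shorten the Finsler shortest path, giving $(\boldsymbol{D}_{\tilde{\mathcal{X}}})_{i,j} \le (\boldsymbol{D}_{\tilde{\mathcal{Y}}})_{i,j}$ for every ordered pair. Combined with the definition of consistency, a pair is inconsistent precisely when this inequality is strict, so it suffices to treat that case.

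Assume therefore that $(\boldsymbol{D}_{\tilde{\mathcal{X}}})_{i,j} < (\boldsymbol{D}_{\tilde{\mathcal{Y}}})_{i,j}$. First I would take a length-minimising curve $\gamma$ from $\tilde{x}_i$ to $\tilde{x}_j$ on $\tilde{\mathcal{X}}$ (or, if minimisers are not guaranteed, a curve within $\varepsilon$ of the infimum, eventually letting $\varepsilon \to 0$). This curve cannot remain inside $\tilde{\mathcal{Y}}$: were it contained in $\tilde{\mathcal{Y}}$, it would be an admissible partial-manifold curve of length $(\boldsymbol{D}_{\tilde{\mathcal{X}}})_{i,j} < (\boldsymbol{D}_{\tilde{\mathcal{Y}}})_{i,j}$, contradicting the definition of $(\boldsymbol{D}_{\tilde{\mathcal{Y}}})_{i,j}$. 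Hence $\gamma$ must traverse a missing part and cross the boundary $\tilde{\mathcal{B}}$. I would then let $\tilde{x}_{b_1} \in \tilde{\mathcal{B}}$ be the \emph{first} point at which $\gamma$ leaves $\tilde{\mathcal{Y}}$ and $\tilde{x}_{b_2} \in \tilde{\mathcal{B}}$ the \emph{last} point at which it re-enters, so that the opening arc $\tilde{x}_i \to \tilde{x}_{b_1}$ and the closing arc $\tilde{x}_{b_2} \to \tilde{x}_j$ both lie entirely in $\tilde{\mathcal{Y}}$.

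Next I would split the Finsler length of $\gamma$ at these two boundary points. By additivity of length along the curve, and since each sub-arc is traversed in the forward direction, the opening arc has length at least $(\boldsymbol{D}_{\tilde{\mathcal{Y}}})_{i,b_1}$ and the closing arc at least $(\boldsymbol{D}_{\tilde{\mathcal{Y}}})_{b_2,j}$, both being directed admissible curves on $\tilde{\mathcal{Y}}$, while the middle arc, a directed admissible curve on $\tilde{\mathcal{X}}$ from $\tilde{x}_{b_1}$ to $\tilde{x}_{b_2}$, has length at least $(\boldsymbol{D}_{\tilde{\mathcal{X}}})_{b_1,b_2}$. Bounding the middle term via \cref{prop: Finsler and Euclidean straight line shortest} by $C_{\tilde{F}}\, d_E(\tilde{x}_{b_1}, \tilde{x}_{b_2})$ yields $(\boldsymbol{D}_{\tilde{\mathcal{X}}})_{i,j} \ge (\boldsymbol{D}_{\tilde{\mathcal{Y}}})_{i,b_1} + C_{\tilde{F}}\, d_E(\tilde{x}_{b_1}, \tilde{x}_{b_2}) + (\boldsymbol{D}_{\tilde{\mathcal{Y}}})_{b_2,j}$; minimising the right-hand side over all boundary pairs gives exactly $(\boldsymbol{D}_{\tilde{\mathcal{X}}})_{i,j} \ge \boldsymbol{K}^F_{i,j}$.

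Chaining inequalities then closes the argument: $\boldsymbol{K}^F_{i,j} \le (\boldsymbol{D}_{\tilde{\mathcal{X}}})_{i,j} < (\boldsymbol{D}_{\tilde{\mathcal{Y}}})_{i,j}$, so $(\boldsymbol{D}_{\tilde{\mathcal{Y}}})_{i,j} > \boldsymbol{K}^F_{i,j}$ and the indicator $\mathcal{C}_{\mathcal{W}_{\mathcal{F}}}(\tilde{x}_i, \tilde{x}_j)$ is zero, so the inconsistent pair is rejected, which is what I want. The main obstacle will be the structural step of decomposing $\gamma$ at the correct boundary points while respecting the directional, asymmetric nature of the Finsler metric: I must keep the index order in the decomposition ($i \to b_1$, $b_1 \to b_2$, $b_2 \to j$) aligned with the directed entries appearing in $\boldsymbol{K}^F$, and handle a curve that crosses the missing region several times — a difficulty resolved cleanly by using the first exit and last re-entry, so that only the two outer arcs must lie in $\tilde{\mathcal{Y}}$. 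The sole analytic caveat is the existence of a minimiser, which the $\varepsilon$-approximation fallback removes.
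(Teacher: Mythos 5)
Your proposal is correct and follows essentially the same route as the paper, whose proof simply defers to the Riemannian wormhole argument of \cite{bracha2024wormhole} with the Euclidean/Riemannian lower bound replaced by \cref{prop: Finsler and Euclidean straight line shortest}; your contrapositive decomposition at the first exit and last re-entry boundary points, with the directed splits $(\boldsymbol{D}_{\tilde{\mathcal{Y}}})_{i,b_1}$ and $(\boldsymbol{D}_{\tilde{\mathcal{Y}}})_{b_2,j}$ matching the ordering in $\boldsymbol{K}^F$, is exactly that argument spelled out. Your version is in fact more explicit than the paper's one-line proof, and correctly handles the asymmetry and multiple boundary crossings.
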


\begin{proof}
    The proof follows the exact same arguments as in the Riemannian case, where now \cref{eq: Euclidean straight line shortest,eq: Riemann and Euclidean straight line shortest} are replaced with \cref{prop: Finsler and Euclidean straight line shortest}.
\end{proof}

We thus propose the weight scheme $w_{i,j} = \mathcal{C}_{{\mathcal{W}_{\mathcal{F}}}}(\tilde{x}_i,\tilde{x}_j)$ for Finsler MDS to provide robust embeddings to missing components. For optimisation algorithms requiring a symmetric weight scheme, such as our Finsler Smacof algorithm, we symmetrise it by taking the intersection $w_{i,j} = \sqrt{\mathcal{C}_{{\mathcal{W}_{\mathcal{F}}}}(\tilde{x}_i,\tilde{x}_j) \mathcal{C}_{{\mathcal{W}_{\mathcal{F}}}}(\tilde{x}_j,\tilde{x}_i)}$. Note that the square root is superfluous for binary criteria, but is not so when considering soft masks. In \cite{bracha2024wormhole}, the criterion is sometimes softened by considering the ratio between the computed shortest path lengths and the criterion matrix, and cutting it off to 1. This allows to take into account almost consistent pairs where there is only a small perturbation of the true geodesic distance, providing a reasonable compromise between accuracy and amount of data to rely on. We can soften our criterion in the same fashion by taking: $\min\Big\{\tfrac{\boldsymbol{K}^F}{\boldsymbol{D}_{\tilde{\mathcal{Y}}}}, 1 \Big\}$.

We now show in a useful example how to derive the Finsler constant $C_{\tilde{F}}$ when the Finsler metric is a Randers metric  with isotropic uniform Riemannian component $\tilde{F}_{\tilde{x}}(u) = \lVert u\rVert_2 + \tilde{\omega}(\tilde{x})^\top u$. Taking $u=d\tilde{s}$ to be an infinitesimal Euclidean arclength tangent vector, its Finsler length becomes minimal when $d\tilde{s}$ is oppositely aligned with the Randers drift component $\tilde{\omega}(\tilde{x})$. This leads to $\tilde{F}_{\tilde{x}}(d\tilde{s}) \ge (1-\lVert\tilde{\omega}(\tilde{x})\rVert_2) \lVert d\tilde{s}\rVert_2$. Assuming the knowledge of $\tilde{\alpha}_{\max} = \max\limits_{\tilde{x}} \lVert\tilde{\omega}(\tilde{x})\rVert_2 < 1$, for instance if we are provided with the maximum possible norm of the current on the manifold, we get 
$\tilde{F}_{\tilde{x}}(d\tilde{s}) \ge (1-\tilde{\alpha}_{\max}) \lVert d\tilde{s}\rVert_2$, meaning that $C_{\tilde{F}} = 1-\tilde{\alpha}_{\max}$.

\section{Implementation Details and Additional Experiments}
\label{sec: additional details and res experiments}

\subsection{Data Visualisation Experiments}\label{subsec:data_visualisation_experiments}

We describe the implementation details in \cref{sec: visualisation exp implementation considerations} 
of experiments in \cref{subsec:data_visualisation}
and present additional visualisation results in \cref{subsec:data_visualisation}.
These simple experiments do not require any advanced hardware, e.g.\ a commercial CPU suffices.

\subsubsection{Implementation Considerations}
\label{sec: visualisation exp implementation considerations}

In the visualisation experiments, we embed data with Finsler MDS into the canonical Randers space $\mathcal{X} = \mathbb{R}^m$, with $m\in\{2,3\}$. The canonical Randers metric is chosen to have the fixed asymmetry level $\alpha = 0.5$. 
All Finsler MDS embeddings for visualisation are computed with the Finsler SMACOF algorithm. Unless specified otherwise, they use uniform weights $w_{i,j}$.
Recall that the traditional SMACOF algorithm is well-known to be sensitive to initialisation. To avoid getting stuck in bad local minima, it is considered standard practice to initialise it with the Isomap \cite{schwartz1989numerical,tenenbaum2000global} embedding, even if the weights $w_{i,j}$ are not uniform.
Following this idea, we initialise the SMACOF algorithm with the Isomap embedding to $\mathbb{R}^m$ applied to the symmetrised dissimilarity matrix $\boldsymbol{D}^S = \tfrac{\boldsymbol{D} + \boldsymbol{D}^\top}{2}$.

In practice, we found that pseudo-inverting the $K$ matrix for the Finsler SMACOF update (see \cref{prop:FSMACOF}) was slow and unstable when there are many data points. To overcome this issue, we first multiplied \cref{eq: grad = 0 k+1 k before pseudo inverse} by $V^\top$ leading to a more stable update rule requiring the pseudo-inversion of a symmetric matrix
\begin{equation}
    \vect(\boldsymbol{X}^{k+1}) = (K')^\dagger\vect( B'(\boldsymbol{X}^k)\boldsymbol{X}^k - C'),
\end{equation}
where the matrices $K'$, $B'(\boldsymbol{X}^k)$, and $C'$ are the modified matrices $K' = (I_m + \omega\omega^\top) \otimes (V^\top V)$, $B'(\boldsymbol{X}^k) = V^\top B(\boldsymbol{X}^k)$, and $C' = V^\top C$. In addition, we resorted to the Generalized Minimal Residual method (GMRES) \cite{saad1986gmres}, which is a fast alternative solver of linear systems, bypassing the need to compute the Moore-Penrose pseudo-inverse of the large matrix $(K')^\dagger$ when the number of points $N$ is large.
We share the seeded code to reproduce our data and results.

\hfill \break
\noindent \textbf{Asymmetric Manifold Flattening.}
In this experiment from the main paper and the additional one in the supplementary, we sample $N = 3000$ i.i.d.\ random vertices from the Swiss roll. The unit Euclidean vector $\tilde{\omega}$, giving the direction of the Randers metric equipping the Swiss roll, is chosen to be intrinsically uniform along the length of the Swiss roll. Note that although they are intrinsically uniform in the tangent planes $\mathcal{T}_{\tilde{x}}\tilde{\mathcal{X}} = \mathbb{R}^2$, they are not uniform extrinsically when rotating these planes to be tangent to the original embedding of the Swiss roll, as shown for instance in \cref{fig: swiss role single alpha no isomap}. Denote $\hat{\omega}(\tilde{x})\in\mathbb{R}^3$ the extrinsic embedding of $\tilde{\omega}$ in the original embedding space $\mathbb{R}^3$ of the Swiss roll manifold $\tilde{\mathcal{X}}$.
To compute the asymmetric geodesic distances, we compute the symmetric k-Nearest Neighbour (kNN) graph, with $k=10$, based on the Euclidean distances in $\mathbb{R}^3$. Once the logical graph is computed, we compute the distances on these edges using a first order approximation. If points $\tilde{x}_i$ and $\tilde{x}_j$ are neighbours, we approximate $d_{\tilde{F}^{\tilde{\alpha}}} (x_i,x_j)\approx \lVert x_j - x_i \rVert_2 + \tilde{\alpha}\hat{\omega}(x_i)^\top (x_j-x_i)$, and assign this distance to the directed edge from node $i$ to node $j$, and vice versa for the directed edge from node $j$ to node $i$. This procedure, which generalises the standard Isomap \cite{schwartz1989numerical,tenenbaum2000global} approach, constructs an asymmetric weighted kNN directed graph. We can now apply Dijkstra's algorithm \cite{dijkstra1959note} to compute the approximate geodesic distances between all pairs of points. The results form the dissimilarity matrix $\boldsymbol{D}$, which is the input for the embedding algorithm. The result in \cref{fig: swiss role single alpha no isomap} corresponds to $\tilde{\alpha} = 0.3$.

\hfill \break
\noindent \textbf{Robustness to Holes.}
In this experiment, $2000$ i.i.d. points are sampled on the full Swiss roll, but points falling within a rectangular region encoding the hole are removed. The Randers metric equipping the manifold $\tilde{\mathcal{X}}$ is the same as in the \emph{Asymmetric Manifold Flatenning} experiments on the Swiss roll with $\tilde{\alpha} = 0.5$. We apply the same algorithm to compute the Randers distance between points, with $k=15$ in the kNN graph construction. To create an embedding that is robust to the missing part, the weights are given by the binary Finsler wormhole criterion, logically symmetrised: $w_{i,j} = \sqrt{\mathcal{C}_{{\mathcal{W}_{\mathcal{F}}}}(\tilde{x}_i,\tilde{x}_j) \mathcal{C}_{{\mathcal{W}_{\mathcal{F}}}}(\tilde{x}_j,\tilde{x}_i)}$ (see \cref{sec: wormhole Finsler MDS}). To compute the wormhole criterion, we assume that the metric is behaved in the missing parts similarly to the rest of the data, leading to a choice of $\tilde{\alpha}_{\max} = \tilde{\alpha}$ to compute the constant $C_{\tilde{F}}$ in \cref{prop: Finsler and Euclidean straight line shortest}.

\hfill \break
\noindent \textbf{Unflattening Current Maps.}
In this experiment from the main paper and the additional one in the supplementary (corresponding to \cref{fig: current map river}), we sample $N$ i.i.d.\ random points in a rectangular region $\tilde{\Omega}$ of the plane $\mathbb{R}^2$.
Given an unconstrained current $\breve{v}(\tilde{x}_i)\in\mathbb{R}^2$ at any point $\tilde{x}_i \in\tilde{\Omega}$, the current is then chosen to be $\tilde{v}(\tilde{x}_i) = \tilde{\alpha} \tfrac{\breve{v}(\tilde{x}_i)}{\max_{j}\lVert \breve{v}(\tilde{x}_j)\rVert_2}$. 
The Randers metric at the sampled point $\tilde{x}_i$ is then chosen to be $\tilde{F}(\tilde{x}_i) = \lVert u\rVert_2 - \tilde{v}(\tilde{x}_i)^\top u$ (see \cref{sec: current vs randers}). We then apply the same algorithm as in the \emph{Asymmetric Manifold Flattening} experiment on the Swiss roll to compute Randers geodesic distances, using a kNN graph with $k=10$ neighbours. Note that since the original space and its tangent space coincide $\tilde{\mathcal{X}} = \mathcal{T}_{\tilde{x}}\tilde{\mathcal{X}}$ at all points $\tilde{x}$, the extrinsic embedding of the drift component $\tilde{\omega}(\tilde{x}) = -\tilde{v}(\tilde{x})$ is the same as its intrinsic version $\hat{\omega}(\tilde{x}) = \tilde{\omega}(\tilde{x})$.

For the experiment in the main paper, we sample $N=2000$ points from the domain $\tilde{\Omega} = [0,10]^2$. 
At any point $\tilde{x}_i = (\tilde{x}_i^{(1)},\tilde{x}_i^{(2)})^\top \in \Omega$, we define the unconstrained current field $\breve{v}(\tilde{x}_i) = \big(\sin(\nu \tilde{x}_i^{(1)}) + \cos(\nu \tilde{x}_i^{(2)}),
\cos(\nu \tilde{x}_i^{(1)}) - \sin(\nu \tilde{x}_i^{(2)})
\big)^\top$, with $\nu=2$. The current field is constructed with $\tilde{\alpha} = 0.5$.
For the river experiment in the supplementary, we sample $N=1000$ points from the domain $\tilde{\Omega} = [0,10]\times [0,1]$. The unconstrained current at point $\tilde{x}_i$ is given by $\breve{v}(\tilde{x}_i) = (1-|2\tilde{x}_i^{(2)}-1|,0)^\top$. The current is then constructed with $\tilde{\alpha} = 0.2$.

\hfill \break
\noindent \textbf{Revealing Graph Hierarchies}
In this experiment, we construct a full and complete binary tree of depth $h = 7$, having thus $N = 2^{h + 1} - 1 = 255$ nodes. The edge from a parent to it child is given the weight of $0.5$, whereas the edge from a child to its parent has a weight of $1.5$. Additionally, we add undirected edges between all nodes at the same height, with a weight of $0.1$.
Given two nodes connected by an edge, their distances is given by the edge weight. Asymmetric geodesic distances between any two nodes are then computed using Dijkstra's algorithm, which constitute the dissimilarity matrix $\boldsymbol{D}$.

\subsubsection{Additional Results}
\label{sec: additional results visualisation exp}

\begin{figure}[ht]
  \centering
   \includegraphics[width=\columnwidth]{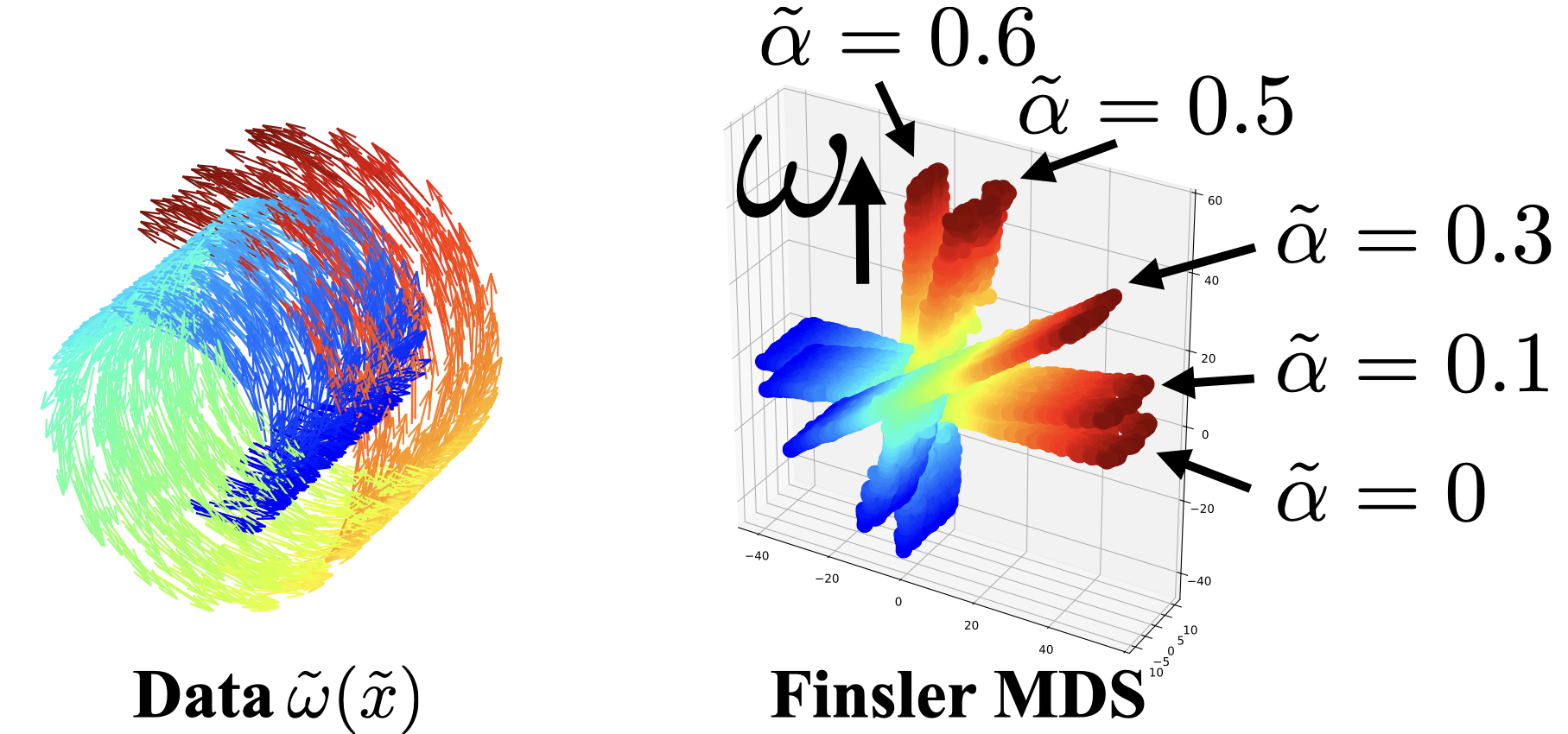}
   \caption{Flattening the Swiss roll equipped with a Randers metric $\tilde{F}^{\tilde{\alpha}}$ with various asymmetry levels $\tilde{\alpha}$ given by $\tilde{F}_{\tilde{x}}^{\tilde{\alpha}}(u) = \lVert u\rVert_2 + \tilde{\alpha}\tilde{\omega}^\top u$ and $\lVert \tilde{\omega} \rVert_2 = 1$.
   We superimpose on the right the resulting Finsler MDS embeddings in the 3D canonical Randers space with fixed asymmetry $\alpha = \lVert \omega \rVert_2$.
   }
   \label{fig: swiss role full summary}
\end{figure}

\hfill \break
\noindent \textbf{Asymmetric Manifold Flattening.}
By changing the value of $\tilde{\alpha}$, we vary the amount of asymmetry on the Swiss roll. However, in this experiment, we do not change the asymmetry measure of the canonical Randers space of the embeddings: $\alpha$ is fixed. We superimpose in \cref{fig: swiss role full summary} the resulting Finsler MDS embeddings for various asymmetry levels of the data $\tilde{\alpha}\in\{0,0.1,0.3,0.5,0.6\}$. In all cases, the embedded Swiss roll resembles a flat 2D band in 3D, albeit with varying vertical orientation. As expected, the higher the value of  $\tilde{\alpha}$, the more the embedded Swiss roll becomes vertical along the axis $\vec{z}$ of asymmetry. When the data is (close to) symmetric, i.e.\ $\tilde{\alpha}$ is (close to) $0$, the embedding is (close to) aligned with the $xy$ plane. 
Finsler MDS thus not only provides embeddings preserving the manifold structure, but its verticality also provides an intuitive visual cue encoding the asymmetry of the data.

\begin{figure}[ht]
  \centering
  \includegraphics[width=\columnwidth]{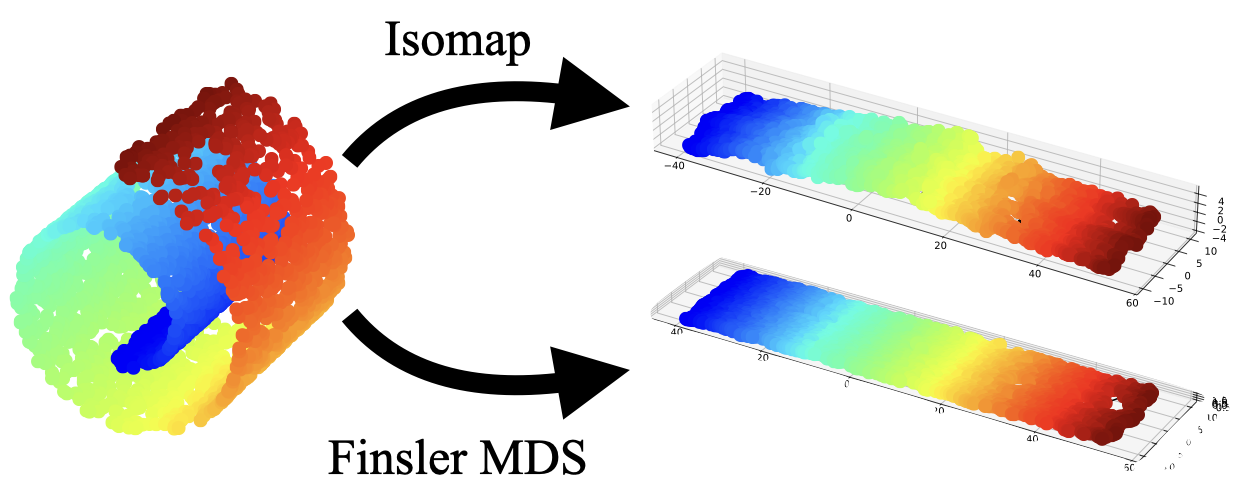}
   \caption{Flattening the original symmetric Swiss roll. The embedding is either into the Euclidean space $\mathbb{R}^3$ with Isomap or into the canonical Randers space $\mathbb{R}^3$ with our Finsler MDS. Finsler MDS provides robust embeddings that generalises traditional symmetric embedding methods on symmetric data while revealing the additional information that the data is symmetric.}
   \label{fig: swiss roll full symmetric}
\end{figure}

\hfill \break
\noindent \textbf{Symmetric Manifold Flattening.} We focus on the embedding of the vanilla symmetric Swiss roll, i.e.\ $\tilde{\alpha} = 0$, to $\mathbb{R}^3$ using either the traditional MDS, with Isomap, or Finsler MDS, with our Finsler SMACOF algorithm. 
These results
are presented without other values of $\tilde{\alpha}$ in \cref{fig: swiss roll full symmetric}.
For the Isomap embedding to $\mathbb{R}^3$, the Swiss roll is not perfectly flattened in the $xy$ hyperplane. This incorrectly suggests that the Swiss roll is not a flat Riemannian structure, i.e.\ with effectively $0$ Gaussian curvature. This error is due to small noise in the estimate of the distance matrix $\boldsymbol{D}$ as Dijkstra's algorithm only provides an approximation to geodesic distances as geodesic paths are constrained to live on the neighbourhood graph constructed from the data. To avoid this issue, the Swiss roll is usually embedded to $\mathbb{R}^2$, yielding the desired 2D flattened Swiss roll rectangle. In contrast, our Finsler MDS embedding to the canonical Randers space $\mathbb{R}^3$ is flattened to the $xy$ plane and is similar to the ideal 2D Isomap embedding, as predicted by \cref{th: canonical randers generalises riemann with extra dim}. Additionally, our embedding also provides the information that the original Swiss roll is a symmetric structure as all embedded points have the same height. As such, Finsler MDS not only robustly provides superior embeddings for symmetric data that generalise the traditional methods, it also yields additional information compared to them.

\begin{figure*}[t]
  \centering
   \includegraphics[width=\textwidth]{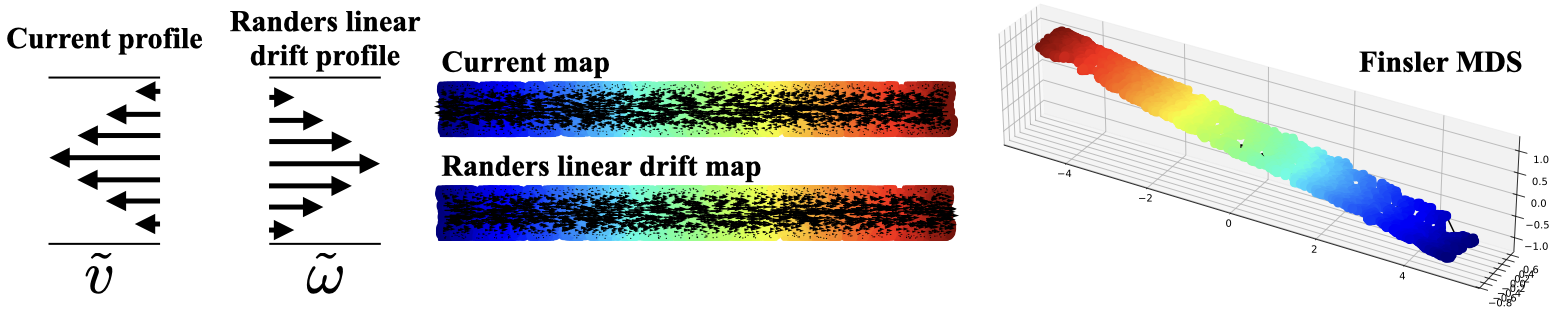}
   \caption{Embedding of the river map with a fixed current profile $\tilde{v}$. The associated Randers drift component $\tilde{\omega}$ is in the opposite direction. Plotting arrows on the map might lead to cluttered visualisations that make the asymmetry difficult to read even in this simple toy example. In contrast the Finsler MDS embedding clearly reveals the asymmetric nature of the river while preserving its spatial straight property.}
   \label{fig: current map river}
\end{figure*}

\hfill \break
\noindent \textbf{Unflattening Current Maps.} In addition to the unflattening of the current map in \cref{fig: current maps sea}, we also embed using Finsler MDS the classic river manifold in \cref{fig: current map river}. As explained in \cref{sec: current vs randers}, from a timewise perspective, we equip the river with a Randers field with $\tilde{\omega} = -\tilde{v}$, where $\tilde{v}$ is the current field. The Finsler MDS embedding of the river leads to an intuitive embedding clearly revealing the existence of asymmetry between points upstream and downstream. This contrasts with the original current map, even when enriched with arrows to artificially break the Euclidean symmetry, as they can be difficult to discern when numerous or with low magnitude.

\subsection{Digraph Embedding and Link Prediction Experiments Implementation Details}\label{subseec:digraph_embedding_and_link_prediction}

We describe the setups and additional details of our experiments in \cref{sec:digraph_embedding}. 
The experiments are performed on a NVIDIA DGX A100 GPU.

\begin{table}[t]
\Huge
\centering
\resizebox{\columnwidth}{!}{%
\begin{tabular}{lcccccccr}
\toprule
& Dataset & Cora & Citeseer   & Gr-QC& Chameleon & Squirrel &  Arxiv-Year\\
\midrule
& $\abs{\mathcal{V}}$  & 2,708 & 3,327 &  5,242 & 2,277 & 5,201  & 169,343 \\
& $\abs{\mathcal{E}}$ &  5,429 & 4,552   & 14,496 & 31,371 & 198,353 &  1,166,243 \\
\bottomrule
\end{tabular}
}
\caption{Summary of dataset statistics for link prediction tasks. We note $\abs{\mathcal{V}}$ and $\abs{\mathcal{E}}$ the numbers of nodes and edges, respectively.
}
\label{tab:dataset}
\end{table}

\hfill \break
\noindent \textbf{Datasets. }
For both the digraph embedding and link prediction tasks, we evaluate on six publicly available directed graph datasets: the citation networks Cora \cite{sen2008collective} and Citeseer \cite{yang2016revisiting}, the arXiv collaboration network in general relativity and quantum cosmology (Gr-QC) \cite{leskovec2007graph}, and three heterophilic graphs: Chameleon, Squirrel \cite{rozemberczki2021multi}, and Arxiv-Year \cite{hu2020open}. 
The detailed statistics of these benchmarks are
summarized in \cref{tab:dataset}.

\hfill \break
\noindent \textbf{Digraph Embedding Baseline. }
To utilize the directional property for learning efficient representations, we propose computing embeddings in Finsler space instead of Euclidean space. 
We note that while \cite{lopez2021symmetric} explores a Finsler-Riemannian framework for graph embedding, their approach is not applicable to directed graphs. Therefore, we do not include comparisons between their framework and our Finsler representation for digraph embedding.

\hfill \break
\noindent \textbf{Link Prediction Baseline. }
For link prediction tasks, we compare our method with NERD \cite{khosla2020node}, DiGCN \cite{tong2020digraph}, MagNet \cite{zhang2021magnet}, DiGAE \cite{kollias2022directed}, ODIN \cite{yoo2023disentangling}, and DUPLEX \cite{ke2024duplex}. 
NERD is a shallow method that uses node semantics based on a random walk strategy to sample node neighbourhoods from a directed graph.
DiGCN introduces a spectral Graph Neural Network (GNN) model built on digraph convolution, utilizing Personalized PageRank as its foundation.
MagNet proposes the magnetic Laplacian to define graph convolutions. 
Both DiGCN and MagNet are spectral-based methods. 
DiGAE is a digraph autoencoder model that employs a directed GCN as its encoder.
ODIN is a recent shallow method that learns multiple embeddings per node to model directed edge formation factors while disentangling interest factors from in-degree and out-degree biases.
DUPLEX employs dual graph attention network encoders that operate on a Hermitian adjacency matrix.

\hfill \break
\noindent \textbf{Digraph Embedding Setup. }
To assess the capacity of the Euclidean and Finsler representations, 
we embed the full data
with a Multi-Layer Perceptron (MLP)
and compute the pairwise distances in the embedding space.
We implement our proposed method using PyTorch \cite{paszke2019pytorch}.
We train the Euclidean embedding with 
Euclidean stochastic gradient descent.
Riemannian stochastic gradient descent \cite{bonnabel2013stochastic} generalises classical stochastic gradient descent to optimization on Riemannian manifolds by replacing Euclidean updates with retractions that map stochastic gradients from tangent spaces back onto the manifold.
For the Finsler embedding, we train the embedding by adapting the Riemannian stochastic gradient descent
from the Riemannian metric with the canonical Randers metric defined in \cref{sec:Finsler Multi-Dimensional Scaling}.
We consider the embedding space $\mathbb{R}^m$ of various dimensions $m\in\{2,5,10,50\}$.
For each dimension $m$, we use the Optuna \cite{akiba2019optuna} hyperparameter optimiser to choose
the learning rate, the number of hidden layers, the hidden dimension, and the dropout probability within candidate sets.
These candidate sets are
$\{5e^{-1}, 3e^{-1}, 2e^{-1}, 1e^{-1}, 5e^{-2}, 1e^{-2}, 5e^{-3}, 1e^{-3}\}$ for the learning rate,  
$\{1, 2, 3, \cdots,10\}$ for the number of 
hidden
layers,
$\{64,  128, 256, 512\}$ for the  hidden dimension,
and 
$\{0, 0.1, 0.2, \cdots, 0.9\}$
for the dropout probability.

\hfill \break
\noindent \textbf{Link Prediction Setup. }
We evaluate on two types of link prediction tasks. The first task involves predicting the direction of edges between vertex pairs $u$ and $v$, where
it is known that there exists an edge between the two vertices but not its direction:
$(u, v)\in \mathcal{E}$ or $(v, u)\in \mathcal{E}$.
The second task focuses on existence prediction, where the goal is to determine whether $(u, v)\in \mathcal{E}$, considering 
vertex pairs $(u, v)$. 
For link prediction tasks, we 
divide the graph datasets,
by partitioning the edges randomly while preserving the graph connectivity, 
into 80\% (of edges) for training, 15\% (of edges) for testing, and 5\% (of edges) for validation, following the work \cite{zhang2021magnet}. 
Performance is assessed by measuring the Area Under the ROC Curve (AUC).
The link prediction quality is computed by the average performance and standard deviation over 10 random splits.
We utilize the source code released by the authors for the baseline algorithms and optimize their hyperparameters using Optuna \cite{akiba2019optuna}.

\end{document}